\documentclass[sigconf]{acmart}
\makeatletter
\def\@ACM@checkaffil{% Only warnings
    \if@ACM@instpresent\else
    \ClassWarningNoLine{\@classname}{No institution present for an affiliation}%
    \fi
    \if@ACM@citypresent\else
    \ClassWarningNoLine{\@classname}{No city present for an affiliation}%
    \fi
    \if@ACM@countrypresent\else
        \ClassWarningNoLine{\@classname}{No country present for an affiliation}%
    \fi
}
\makeatother
\pagestyle{plain}
\usepackage{color}
\usepackage{float}
\usepackage{graphicx}
\usepackage{amsfonts}
\usepackage{latexsym,amsmath}
\usepackage{todonotes}
\usepackage[ruled]{algorithm}
\usepackage[noend]{algpseudocode}
\algdef{SE}[DOWHILE]{Do}{doWhile}{\algorithmicdo}[1]{\algorithmicwhile\ #1}
\usepackage{comment}
\usepackage{epstopdf}
\usepackage{soul}
\setcounter{tocdepth}{3}
\setcounter{secnumdepth}{3}
\usepackage{cleveref}

\title{Capturing a Moving Target by Two Robots in the F2F Model}
\author{Khaled Jawhar}
\affiliation{%
  \institution{School of Computer Science, Carleton University}
  \city{Ottawa}
  \state{ON}
  \country{Canada}
}
\author{Evangelos Kranakis}
\affiliation{%
  \institution{School of Computer Science, Carleton University}
  \city{Ottawa}
  \state{ON}
  \country{Canada}
}
\begin{document}
\begin{abstract}
We study a search problem on capturing a moving target on an infinite real line. Two autonomous mobile robots (which can move with a maximum speed of 1) are initially placed at the origin, while an oblivious moving target is initially placed at a distance 
$d$ away from the origin. The robots can move along the line in any direction, but the target is oblivious, cannot change direction, and moves either away from or toward the origin at a constant speed 
$v$. Our aim is to design efficient algorithms for the two robots to capture the target. The target is captured only when both robots are co-located with it. The robots communicate with each other only face-to-face (F2F), meaning they can exchange information only when co-located, while the target remains oblivious and has no communication capabilities.

We design algorithms under various knowledge scenarios, which take into account the prior knowledge the robots have about the starting distance 
$d$, the direction of movement (either toward or away from the origin), and the speed $v$ of the target. As a measure of the efficiency of the algorithms, we use the competitive ratio, which is the ratio of the capture time of an algorithm with limited knowledge to the capture time in the full-knowledge model.

In our analysis, we are mindful of the cost of changing direction of movement, and show how to accomplish the capture of the target with at most three direction changes (turns).

%\noindent
{\bf Key words and phrases.} Autonomous robot, Capture time, Competitive ratio, F2F (Face-to-Face), Knowledge model, Oblivious target, Searcher, Speed, Turn.
\end{abstract}
\maketitle
\section{Introduction}

We study the problem of linear search for an oblivious moving target by two autonomous mobile robots. Linear search problems have been extensively studied and applied in various domains such as data mining, surveillance, and rendezvous, cf.\cite{gal_search_games,ahlswede1987search}. Evacuation (also known as group search) is a related problem in which multiple robots cooperate to find an unknown target, cf.\cite{group_search}. Capturing a moving target can be viewed as a form of group search where the target itself is mobile. This problem has been explored in numerous settings, including on graphs, cf.\cite{anthony2011game}, as well as in scenarios involving chasing and escaping, cf.\cite{chases_escapes}. A special case of this problem is evacuation with a stationary target ($v=0$).

Several foundational works, including~\cite{BCR93,beck1964linear,bellman1963optimal}, have initiated the study of linear search and evacuation with one or more robots moving at uniform or varying speeds. The focus of this research is to devise algorithms for cooperative searchers that achieve optimal or near-optimal upper and lower bounds by analyzing the competitive ratio.

The competitive ratio provides a measure of the efficiency of an algorithm by comparing the performance of a given algorithm under limited knowledge constraints to the performance in a full-knowledge model. In the full-knowledge model, robots are aware of all input parameters, such as the starting location of the target, its speed, its direction of movement, and its distance from the origin. The optimal algorithm is the one that minimizes the competitive ratio for linear search time, evacuation time, or capture time, depending on the specific problem under consideration.

The primary motivation of our study on capturing a moving target by two robots in the F2F communication model is to understand the impact of input knowledge constraints on capture time. These constraints include the knowledge of the speed and initial distance of the target from the origin. Furthermore, our approach treats a searcher's "turn in direction" as a resource. We propose algorithms that ensure the target is captured with a constant and minimal number of turns.

\subsection{Preliminaries and Notation}

In this section, we define the basic concepts of robot mobility and communication, as well as the four basic knowledge models that we will analyze in the sequel.

The search domain is the infinite line and it is bidirectional in that the robots can move in either direction without this affecting their speed. The searchers can move with maximum speed $1$. The mobile target is oblivious in that it is unable to change its speed, its direction of movement, and has no communication capabilities; it can move either away from or toward the origin with maximum speed $v$, and this is fixed as part of the input. If it is moving away from the origin, we assume that $v < 1$, and we call this the {\em away} model. If the target is moving toward the origin, we assume that $v$ is arbitrary, and we call this the {\em toward} model. It will be convenient throughout the paper to identify the two searching autonomous robots as $R_1$ and $R_2$; however, their capabilities are indistinguishable.

An algorithm capturing the target is a complete description of the trajectories traced by the two robots until they are both co-located with the mobile target (which is moving at speed $v$). Note that only one of the two searchers is not sufficient to capture the moving target; instead, the capture of the target is considered complete only when both robots are co-located with the target. Clearly, if $v = 0$, then the target is static, in which case capturing the target is equivalent to standard evacuation. Our strategies take into account that there is a cost for changing direction. In our algorithms, we will show that the searchers capture the moving target while the number of turns (changes in direction) by either searcher remains constant (at most 3).

The competitive ratio of a strategy (or algorithm) $\mathcal{S}$, denoted by $CR_{\mathcal{S}}$, is defined as the supremum over all starting positions of the target of the time the agents take to capture the target divided by the time it would take if the target's movement was fully known (initial position, speed, direction of movement) to the searchers. The competitive ratio of a certain problem is the infimum of $CR_{\mathcal{S}}$ over all possible strategies $\mathcal{S}$ for the problem. Throughout this paper, we will use the abbreviation $CR$ to refer to the term "competitive ratio". The efficiency or optimality of an algorithm is measured using the competitive ratio.

The searchers are autonomous mobile agents that can move around with maximum speed $1$. In this paper, we are considering the FaceToFace model and refer to it by F2F. In this model, both robots can exchange information only if they are co-located (i.e., meet at a specific point on the line). A typical communication exchange may involve information such as 'target is found', 'move in this direction', etc. Both robots are endowed with pedometers and have computing abilities to deduce the location of each other, through relevant communication exchanges. The robot performs its movement based on calling atomic operations such as {turn, stop, increase/decrease speed, etc.}. As an example, if the robot is moving and it needs to turn, it will use the following operations: {stop, turn, restart, accelerate}. In our analysis, we also take into account the number of turns (changes of direction). Therefore, the goal is to find an algorithm with the best competitive ratio that reduces the number of turns. Alternatively, in some of our algorithms, we increase the speed instead, which is less costly than making a turn; this is because we only need to call accelerate instead of calling turn, while the robot is moving, which requires four atomic operations.

We consider algorithms under various constraints reflecting the knowledge the two robots have about the initial distance of the target from the origin, its direction of movement (away or toward), and the speed of the moving target. We will study algorithms under different knowledge models. The FullKnowledge (FK) model refers to two mobile agents that have complete knowledge of the initial distance, direction of movement, and the speed of the moving target. The NoSpeed model refers to no knowledge of the speed of the target. The NoDistance (ND) model refers to no knowledge of the initial distance of the target. The NoKnowledge (NK) model refers to the case when the robots have no knowledge of the speed or the initial distance of the target.
 
\subsection{Related Work}

Linear search refers to searching on a line for a target located on the real line. The study of linear search problems for a single robot was first initiated by Beck~\cite{beck1964linear} and Bellman~\cite{bellman1963optimal}. Assuming that the distance and the direction to the target are unknown, they proposed an optimal algorithm with a competitive ratio of $9$. Additional work, including deterministic variants of linear search, can be found in~\cite{BCR93}.

There are numerous variants of search and evacuation problems. These can involve environments with multiple robots operating at distinct speeds~\cite{bampas2019linear,isaacCzyzowiczGKKNOS16,PODC16}, as well as different search domains such as disks, triangles, and circles~\cite{brandt2017collaboration,chuangpishit2018evacuating,czyzowicz2018evacuating}.

Further extensions to these problems explore augmenting the capabilities of the robots with additional agents. For instance, the inclusion of a bike as an auxiliary mobile agent to assist robots in reaching the target faster was studied in~\cite{jawhar2021robot}, where robot evacuation on a line was considered in the presence of such an agent.

In the context of this paper, we focus on algorithms for capturing a moving target. This problem resembles the evacuation of robots toward a static target, but with the added complexity of a moving target. The case of a moving target was first studied by McCabe~\cite{mccabe1974}, who investigated searching for an oblivious target following a Bernoulli random walk on the integers. A deterministic oblivious target was first considered in Alpern and Gal~\cite{gal_search_games}[Section 8.5], where the target moves away from the origin at a constant speed $v < 1$, which is known to the searching robot. In their work, the initial distance of the target is unknown, and they provide an algorithm with an optimal competitive ratio for this scenario.

Our work is strongly influenced by and builds upon the results of~\cite{coleman2022line}, where the authors analyzed the competitive ratio under four knowledge models: FullKnowledge, NoSpeed, NoDistance, and NoKnowledgeOfSpeedAndDistance. While their analysis focuses on a single searcher, we extend their work to two searchers operating in the F2F communication model. Additionally, the tightness of the main lower bound for the NoSpeed model with a single searcher was recently established in~\cite{coleman2024lineiwoca}.

The cost of turns in a search algorithm was first proposed in~\cite{gal_search_games} and further investigated in~\cite{angelopoulos2017infinite,demaine2006online}. However, the strategies in~\cite{coleman2022line} do not account for the cost of changing direction. In contrast, our study focuses on capturing a moving target with two robots under the F2F communication model, explicitly considering turn costs. We provide upper and lower bounds for the capture problem while maintaining a constant number of turns and analyzing the impact of various knowledge constraints on the competitive ratio.

\subsection{Outline and Results of the Paper}

In this section, we outline the main results of the paper based on the speed $v$ of the target.

\subsubsection{Full Knowledge Model}

Section~\ref{sec:FKM} discusses the Full Knowledge Model. Results are similar to those in~\cite{coleman2022line} and are summarized in Table~\ref{table1}.

\begin{table}[H]
\caption{Competitive Ratios (CRs) in the Full Knowledge Model.}
\begin{center}
\begin{tabular}{|c| c | c| c|}
\hline
Knowledge & Algorithm & Movement & Competitive Ratio \\
\hline
$v,d$ & Algorithm~\ref{FKGoTogetherAwayAlgorithm}& Away & $\frac{3-v}{1-v}$ if $v<1$ \\
\hline
$v,d$ & Algorithm~\ref{FKGoTogetherTowardAlgorithm}  & Toward & $\frac{3+v}{1+v}$ if $v<1$ \\
\hline
$v,d$ & Algorithm~\ref{FKStayAtOriginTowardAlgorithm} & Toward & $\frac{v+1}{v}$ if $v>1$ \\
\hline
\end{tabular}
\end{center}
\label{table1}
\end{table}

\subsubsection{No Distance Model}

Section~\ref{sec:NDM} is about the No Distance Model, and results are summarized in Table~\ref{table2}.

\begin{table}[H]
\caption{CRs in the No Distance Model.}
\begin{center}
\begin{tabular}{|c| c | c| c | c|}
\hline
Knowledge & Algorithm & Movement & Competitive Ratio\\
\hline
$v$ & Algorithm~\ref{NDAwayZigZagTillOriginAlgorithm}  & Away & $\frac{(v+3)^2}{(1-v)^2}$ \\
\hline
$v$ & Algorithm~\ref{NDAwayMovingInOppDirectionAlgorithm}  & Away & $\frac{(v+3)^2}{(1-v)^2}$ \\
\hline
$v$ & Algorithm~\ref{NDTowardZigZagTillOriginAlgorithm}  & Toward & $(\frac{v-3}{v+1})^2$ if $v\geq 3$ \\
\hline
$v$ & Algorithm~\ref{alg:tf2f1}  & Toward & $(\frac{v-3}{v+1})^2$ if $v\geq 3$ \\
\hline
$v$ & Algorithm~\ref{FKStayAtOriginTowardAlgorithm} & Toward & $1+\frac{1}{v}$ if $v\leq 3$ \\
\hline
\end{tabular}
\end{center}
\label{table2}
\end{table}

Note that in Table~\ref{table2}, Algorithms~\ref{NDAwayMovingInOppDirectionAlgorithm} and~\ref{alg:tf2f1} require at most 3 changes of direction. They have the same competitive ratios as Algorithms~\ref{NDAwayZigZagTillOriginAlgorithm} and~\ref{NDTowardZigZagTillOriginAlgorithm}, respectively, but for the last two algorithms, the number of turns in direction is unbounded.  

\subsubsection{No Speed Model}

Section~\ref{sec:NSM} is about the No Speed Model, and results are summarized in Table~\ref{table3}.

\begin{table}[H]
\caption{CRs for the No Speed Model:$M= \left(\frac 1{1-v}\right)$.}
\begin{center}
\begin{tabular}{|c| c | c| c | c|}
\hline
Knowledge & Algorithm & Movement & Competitive Ratio\\
\hline
$d$ & Algorithm~\ref{NSAwayAlgorithm}  & Away  & 
 $O\left( M^{8} (\log^2 M)/d\right)$\\ 
\hline
$d$ & Algorithm~\ref{NSTowardAlgorithm}  & Toward  & $3$ \\
\hline
\end{tabular}
\end{center}
\label{table3}
\end{table}

\subsubsection{No Knowledge Model}

Section~\ref{sec:NKM} discusses the No Knowledge Model, and results are summarized in Table~\ref{table4}.

\begin{table}[H]
\caption{CRs for the No Knowledge Model: $M= \max \left(d, \frac 1{1-v}\right)$.}
\begin{center}
\begin{tabular}{|c| c | c| c | c|}
\hline
Knowledge & Algorithm & Movement & Competitive Ratio\\
\hline
None & Algorithm~\ref{NKAwayAlgorithm}  & Away & $O\left( M^{10} (\log^2 M) / d \right)$\\
\hline
None & Algorithm~\ref{FKStayAtOriginTowardAlgorithm}  & Toward & $1+\frac{1}{v}$\\
\hline
\end{tabular}
\end{center}
\label{table4}
\end{table}

Note that the number of turns (changes of direction) in all algorithms is at most 3, with the exception of Algorithms~\ref{NDAwayZigZagTillOriginAlgorithm} and~\ref{NDTowardZigZagTillOriginAlgorithm}.

\section{Full Knowledge Model}
\label{sec:FKM}

We begin with the analysis of the FullKnowledge model and distinguish the cases where the target is moving away or toward the origin. Note that Algorithms~\ref{FKGoTogetherAwayAlgorithm},~\ref{FKGoTogetherTowardAlgorithm},~\ref{FKStayAtOriginTowardAlgorithm} below are similar to the corresponding single robot algorithms in~\cite{coleman2022line} the only difference being that the two robots stay together as a single robot. We include the proofs for completeness. Alternatives to Algorithms~\ref{FKGoTogetherAwayAlgorithm},~\ref{FKGoTogetherTowardAlgorithm} arise when the searchers move separately in opposite directions but it is easily verified that this will not change the CRs.  The lower bound proofs are slightly different 

\subsection{Target moving away from the origin}

Here is the outline of the algorithm: Both robots know the speed and the distance to the target, thus they both move in the same direction distance $\frac{d}{1-v}$. If they don't find the target, then they switch direction till they reach the target. 
\begin{algorithm}[H]
\caption{FKGoTogetherAway ($S$ source, $D$ destination)}
\label{FKGoTogetherAwayAlgorithm}
\begin{algorithmic}[1]
\State {$R_{1}$ and $R_{2}$ move together in any direction with speed $1$}
\If {$R_{1}$ and $R_{2}$ don't reach the target at point $\frac{d}{1-v}$}
\State {Both robots switch direction and move till they reach the target}
\EndIf
\end{algorithmic}
\end{algorithm}
We can prove the following theorem.
\begin{theorem}
\label{FKGoTogetherAwayTheoremUpperBound}
For the full knowledge away model,
the competitive ratio of Algorithm~\ref{FKGoTogetherAwayAlgorithm} is at most $\frac{3-v}{1-v}$ 
\end{theorem}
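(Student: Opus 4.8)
The plan is to bound the running time of Algorithm~\ref{FKGoTogetherAwayAlgorithm} against the full-knowledge benchmark, in which the searchers additionally know which side of the origin the target starts on. First I would fix coordinates: by the symmetry of the line, assume the target starts at $+d$ or $-d$, and since we are in the away model it moves at speed $v<1$ away from the origin. The benchmark sends both robots directly toward the known starting side; a robot at position $t$ meets a target at $d+vt$ exactly when $t=d+vt$, so the optimal capture time is $T^{*}=\frac{d}{1-v}$. This is the denominator of the competitive ratio.

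Next I would analyze the algorithm, which commits to one direction, say positive, and drives to the interception point $\frac{d}{1-v}$ of a target on the positive side. Two cases arise. In the \emph{matching} case the target really is on the positive side, so the robots catch it en route at exactly time $T^{*}$; the \textbf{if} condition is never triggered and the ratio for this case is $1$.

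The decisive case is the \emph{opposite} one. Here I would first note that on the first leg the robots and the target move apart, so no capture can occur before the robots reach $\frac{d}{1-v}$ at time $t_1=\frac{d}{1-v}$, at which point they turn. The key step is locating the target at the turn: it has reached $-d-v\,t_1=-\frac{d}{1-v}$, the mirror image of the robots' turning point, so the remaining gap is $\frac{2d}{1-v}$ and the closing speed after the turn is $1-v$. Solving the chase equation $\frac{d}{1-v}-s=-\frac{d}{1-v}-vs$ gives $s=\frac{2d}{(1-v)^2}$ and a total capture time $t_1+s=\frac{(3-v)d}{(1-v)^2}$; dividing by $T^{*}$ gives the ratio $\frac{3-v}{1-v}$.

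To finish, I would observe that the opposite case yields a larger ratio than the matching case, so the supremum over all target placements equals $\frac{3-v}{1-v}$, which is the claimed upper bound. There is no real obstacle in the argument; the only points needing care are confirming that the first leg produces no spurious early capture in the opposite case, and checking that the target's turn-time position is exactly the reflection of the robots' position, which is what makes the gap $\frac{2d}{1-v}$ come out cleanly.
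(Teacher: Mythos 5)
Your proposal is correct and follows essentially the same argument as the paper: travel to the interception point $\frac{d}{1-v}$, observe that in the worst case the target is on the other side at the mirror point so the gap is $\frac{2d}{1-v}$, and close that gap at relative speed $1-v$, giving total time $\frac{(3-v)d}{(1-v)^2}$ and ratio $\frac{3-v}{1-v}$. The only difference is that you spell out the case analysis and the target's turn-time position more explicitly than the paper does.
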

\begin{proof}(Theorem~\ref{FKGoTogetherAwayTheoremUpperBound})
Robots $R_{1}$ and $R_{2}$ move in the same direction for a distance of $\frac{d}{1-v}$. If neither of them catches the target during this time, then the target will be ahead of the robots by a distance of $\frac{2d}{1-v}$. To cover this remaining distance, they need an additional time of $\frac{\frac{2d}{1-v}}{1-v}$.

Therefore, the total time required for both robots to capture the target is:
\[
\frac{d}{1-v} + \frac{\frac{2d}{1-v}}{1-v} = \frac{3d - dv}{(1-v)^2}.
\]
This yields the competitive ratio:
\[
CR = \frac{\frac{3d - dv}{(1-v)^2}}{\frac{d}{1-v}} = \frac{3-v}{1-v}.
\]
This proves Theorem~\ref{FKGoTogetherAwayTheoremUpperBound}.
\end{proof}
In the next theorem Algorithm~\ref{FKGoTogetherAwayAlgorithm} is shown to be optimal.
 \begin{theorem}
\label{FullKnowledgeAwayTheoremLowerBound}
The competitive ratio of any algorithm in the FullKnowledgeAway model is at least $\frac{3-v}{1-v}$ 
\end{theorem}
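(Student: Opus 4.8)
The plan is to pin down the benchmark first: in the full-knowledge model the denominator of the competitive ratio is the time an optimal pair that knows the target's exact initial position (side included), its speed, and its direction would need. Such a pair walks straight toward the correct side and, against a target receding at speed $v$, catches it when $t = d + vt$, i.e.\ at time $OPT = \frac{d}{1-v}$. Hence it suffices to exhibit, for an arbitrary algorithm, a placement of the target for which the capture time is at least $\frac{d(3-v)}{(1-v)^2} = \frac{3-v}{1-v}\cdot OPT$.

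The adversary's only free choice is the side: the target is placed either at $+d$ or at $-d$, in both cases moving away from the origin at speed $v$. I would use the fact that the robots cannot distinguish these two scenarios until one of them is co-located with the target, so their trajectories $(r_1(t),r_2(t))$ coincide in both scenarios up to the first such encounter. For this common trajectory define $\sigma_+$ (resp.\ $\sigma_-$) as the first time some robot occupies the position $d+vt$ of the right-moving target (resp.\ $-(d+vt)$ of the left-moving one); if either is infinite the corresponding scenario is never captured and the bound is trivial, so assume both are finite. By the left/right symmetry of the construction, assume without loss of generality that $\sigma_+ \le \sigma_-$, and let $R_1$ be the robot realizing $\sigma_+$.

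I would then argue in the scenario where the target sits at $-d$. Since $\sigma_+ \le \sigma_-$, no encounter has occurred before $\sigma_+$ in this scenario either, so at time $\sigma_+$ robot $R_1$ is at the positive position $d + v\sigma_+$; as it started at the origin with speed at most $1$, this forces $\sigma_+ \ge \frac{d}{1-v}$. For capture both robots, in particular $R_1$, must meet the left-moving target, i.e.\ $R_1$ must sit at $-(d + vT_-)$ at the capture time $T_-$. The speed-$1$ constraint between times $\sigma_+$ and $T_-$ gives
\[
T_- - \sigma_+ \;\ge\; \bigl(d + v\sigma_+\bigr) + \bigl(d + vT_-\bigr),
\]
which rearranges to $T_-(1-v) \ge 2d + \sigma_+(1+v)$. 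Substituting $\sigma_+ \ge \frac{d}{1-v}$ yields $T_- \ge \frac{2d(1-v) + d(1+v)}{(1-v)^2} = \frac{d(3-v)}{(1-v)^2}$, so the competitive ratio of any algorithm is at least $\frac{3-v}{1-v}$.

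The main obstacle, and the step to state carefully, is the reduction to a single robot under F2F. I must justify that tracking only $R_1$ is legitimate, which rests on two points: that capture genuinely requires \emph{both} robots at the target (so $R_1$ cannot be left behind after $\sigma_+$), and that the trajectories are scenario-independent up to time $\sigma_+$ (so $R_1$'s positive excursion to $d+v\sigma_+$ is forced in the $-d$ scenario as well). Everything after that is the same turn-around estimate as in the upper-bound calculation; note also that the two-robot, F2F, and even unrestricted-communication features can only make the task harder, so they cannot decrease the bound.
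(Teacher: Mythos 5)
Your proof is correct and follows essentially the same adversarial idea as the paper: force some robot to be the first to reach the would-be capture point $\frac{d}{1-v}$ on the side the adversary then declares wrong, and charge it the turn-around cost $\frac{2d}{(1-v)^2}$ against the receding target, giving $\frac{d}{1-v}+\frac{2d}{(1-v)^2}=\frac{3-v}{1-v}\cdot OPT$. Your write-up is in fact more careful than the paper's, which implicitly assumes the robots split symmetrically and reach $\pm\frac{d}{1-v}$ simultaneously; your indistinguishability argument and the reduction via $\sigma_+ \le \sigma_-$ cover arbitrary trajectories.
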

\begin{proof}(Theorem~\ref{FullKnowledgeAwayTheoremLowerBound})
If both robots $R_{1}$ and $R_{2}$ are at the origin and the moving target is at a distance $d$ away from the origin, then any of the two robots needs $\frac{d}{1-v}$ to catch up with the target. Assume $R_{1}$ reaches point $\frac{d}{1-v}$, then the adversary would place the target at $-\frac{d}{1-v}$. At this point, $R_{2}$ catches the target and $R_{1}$ would be far from the target by $\frac{2d}{1-v}$. Thus, for $R_{1}$ to reach the target, it needs time $\frac{\frac{2d}{1-v}}{1-v} = \frac{2d}{(1-v)^2}$. The competitive ratio is then:
\[
CR = \frac{\frac{2d}{(1-v)^2} + \frac{d}{1-v}}{\frac{d}{1-v}} = \frac{3-v}{1-v}.
\]
This proves Theorem~\ref{FullKnowledgeAwayTheoremLowerBound}.
\end{proof}
\subsection{Target moving toward the origin}
 The target moves with speed $v$ toward the origin. There are two cases to consider:

\noindent{\em Case 1: $v\leq 1$} 

Here is the outline of the algorithm: Since both robots know the speed and the distance to the target, they move in the same direction distance $\frac{d}{1+v}$. If they don't catch the target, then they switch direction and keep moving till they reach the target. The algorithm will be as follows:

\begin{algorithm}[H]
\caption{FKGoTogetherToward ($S$ source, $D$ destination)}\label{FKGoTogetherTowardAlgorithm}
\begin{algorithmic}[1]
\State {$R_{1}$ and $R_{2}$ move together in any direction with speed $1$}
\If {$R_{1}$ and $R_{2}$ don't reach the target at point $\frac{d}{1+v}$}
\State {Both robots switch direction and move till they reach the target}
\EndIf
\end{algorithmic}
\end{algorithm}

We prove the following theorem.
\begin{theorem}
\label{FKGoTogetherTowardTheorem}
For the full knowledge toward model, the competitive ratio of Algorithm~\ref{FKGoTogetherTowardAlgorithm} is at most $\frac{3+v}{1+v}$ if $v\leq1$.
\end{theorem}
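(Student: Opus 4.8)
The plan is to follow the template of the proof of Theorem~\ref{FKGoTogetherAwayTheoremUpperBound}, changing only the sign conventions to account for the fact that the target now moves \emph{toward} the origin rather than away from it. I would begin by fixing the denominator of the competitive ratio, i.e.\ the optimal full-knowledge capture time. Knowing the true initial position of the target (in particular, on which side of the origin it sits), the best strategy drives both robots straight at it; the robots advance at speed $1$ while the target approaches at speed $v$, so the separation shrinks at rate $1+v$ and the optimal capture time is $\frac{d}{1+v}$. Both robots arrive simultaneously, so the capture is indeed complete at that instant.

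Next I would bound the running time of Algorithm~\ref{FKGoTogetherTowardAlgorithm}. The two robots first travel together a distance $\frac{d}{1+v}$, consuming time $\frac{d}{1+v}$. If the target lay on the chosen side, the robots reach it exactly at the point $\frac{d}{1+v}$, matching the optimal time; this is the easy case. The competitive ratio is governed by the opposite case, in which the robots have committed to the wrong side. The essential bookkeeping step is to locate the target at the instant the robots arrive at $\frac{d}{1+v}$: having started at $-d$ and advanced toward the origin at speed $v$, it now sits at $-\frac{d}{1+v}$, so the robots and the target are separated by $\frac{2d}{1+v}$.

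The decisive observation — and the reason the toward case is more favorable than the away case — is that after the robots reverse direction, they and the target move \emph{toward each other}. The gap $\frac{2d}{1+v}$ therefore closes at the combined rate $1+v$ (rather than $1-v$ as in the away model), so the second phase lasts $\frac{2d}{(1+v)^2}$. Adding the two phases gives a total capture time of $\frac{d}{1+v}+\frac{2d}{(1+v)^2}=\frac{d(3+v)}{(1+v)^2}$, and dividing by the optimal time $\frac{d}{1+v}$ produces the asserted ratio $\frac{3+v}{1+v}$.

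I expect the only real point requiring care to be bookkeeping rather than genuine difficulty: one must correctly track the target's position under a wrong initial guess and verify that, once the robots turn, the closing speed is $1+v$ because the target keeps heading into the robots' new direction of travel. The hypothesis $v\le 1$ marks the regime in which this algorithm is the method of choice — for $v>1$ a stationary strategy (Algorithm~\ref{FKStayAtOriginTowardAlgorithm}) does better — while the ratio computation above is itself insensitive to the threshold.
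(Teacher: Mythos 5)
Your proposal is correct and follows essentially the same argument as the paper's proof: both compute the worst case as the first leg of length $\frac{d}{1+v}$ plus a second leg closing a gap of $\frac{2d}{1+v}$ at combined rate $1+v$, yielding total time $\frac{d(3+v)}{(1+v)^2}$ against the optimal $\frac{d}{1+v}$. Your write-up is, if anything, more careful than the paper's (which loosely says the robots move "in opposite directions" even though Algorithm~\ref{FKGoTogetherTowardAlgorithm} has them move together), since you explicitly track the target's position at the turn and justify the $1+v$ closing speed.
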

\begin{proof}(Theorem~\ref{FKGoTogetherTowardTheorem})
Both robots $R_{1}$ and $R_{2}$ move in opposite directions for a distance of $\frac{d}{v+1}$. If $R_{1}$ reaches the target, then $R_{2}$ would be at a distance $\frac{2d}{1+v}$ in the other direction. Thus, it needs time $\frac{2d}{(v+1)^2}$ to reach the target. The total time needed for both robots to evacuate will be:
\[
\frac{d}{v+1} + \frac{2d}{(v+1)^2} = \frac{3d+dv}{(v+1)^2}.
\]
The competitive ratio is then:
\[
CR = \frac{\frac{3d+dv}{(v+1)^2}}{\frac{d}{v+1}} = \frac{3+v}{1+v}.
\]
This proves Theorem~\ref{FKGoTogetherTowardTheorem}.
\end{proof}
\noindent{\em Case 2: $v>1$} 

Here is the outline of the algorithm: Both robots stay at the origin waiting for the target to reach the origin. The algorithm will be as follows:

\begin{algorithm}[H]
\caption{FKStayAtOriginToward ($S$ source, $D$ destination)}\label{FKStayAtOriginTowardAlgorithm}
\begin{algorithmic}[1]
\State {$R_{1}$ and $R_{2}$ stay at the origin.} 
\State {The target moves toward the origin with speed $v$ and meets the two robots}
\end{algorithmic}
\end{algorithm}
We prove the following results.

\begin{theorem}
\label{FKStayAtOriginTowardTheorem}
For the full knowledge toward model the competitive ratio of Algorithm~\ref{FKStayAtOriginTowardAlgorithm} is at most $\frac{v+1}{v}$ if $v \geq 1$.
\end{theorem}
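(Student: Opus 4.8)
The plan is to compare two quantities: the capture time of Algorithm~\ref{FKStayAtOriginTowardAlgorithm}, and the optimal full-knowledge capture time, which is the denominator of the competitive ratio.

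First I would compute the numerator. Under Algorithm~\ref{FKStayAtOriginTowardAlgorithm} both robots remain at the origin. Without loss of generality place the target at position $+d$, moving with velocity $-v$ toward the origin, so its position at time $t$ is $d-vt$ and it reaches the origin at time $d/v$. Since both robots sit at the origin, they are simultaneously co-located with the target at that instant, and the capture time of the algorithm is exactly $d/v$.

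Next I would establish the full-knowledge benchmark, namely the minimum time in which both robots (each of speed at most $1$) can be co-located with the target. The key observation is a closing-speed argument: any robot moving toward the approaching target closes the gap at a rate of at most $1+v$, so starting from an initial gap of $d$ no robot can reach the target before time $\frac{d}{1+v}$; hence the full-knowledge capture time is at least $\frac{d}{1+v}$. This bound is tight, since if both robots move toward the target at full speed $1$ they each meet it at position $\frac{d}{1+v}$ at exactly time $\frac{d}{1+v}$. Thus the optimal full-knowledge time equals $\frac{d}{1+v}$, which is consistent with the denominator implicit in the proof of Theorem~\ref{FKGoTogetherTowardTheorem}.

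Finally, dividing the two quantities gives
\[
CR = \frac{d/v}{d/(1+v)} = \frac{v+1}{v},
\]
establishing the claimed upper bound (in fact an equality). The only subtle step is the benchmark: I would argue carefully that no cooperative strategy can beat the closing-speed limit $\frac{d}{1+v}$ — capturing requires \emph{both} robots at the target, so it is certainly at least the time for a single robot to intercept it — and that this limit is actually achievable. The remaining arithmetic is immediate. The hypothesis $v \ge 1$ is not needed for the computation itself; it only delineates the regime in which staying at the origin is preferable to the go-together strategy of Algorithm~\ref{FKGoTogetherTowardAlgorithm}.
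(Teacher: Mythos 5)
Your proposal is correct and follows essentially the same route as the paper: the algorithm's capture time is $d/v$, the full-knowledge benchmark is $d/(1+v)$, and the ratio is $\frac{v+1}{v}$. The paper simply asserts the benchmark value without your closing-speed justification, so your version is the same argument written out more carefully.
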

\begin{proof}(Theorem~\ref{FKStayAtOriginTowardTheorem})
If both robots wait for the moving target at the origin, then the capture time will be $\frac{d}{v}$, thus the competitive ratio would be:
\[
CR = \frac{\frac{d}{v}}{\frac{d}{v+1}} = \frac{v+1}{v}.
\]
This proves Theorem~\ref{FKStayAtOriginTowardTheorem}.
\end{proof}
\begin{theorem}
\label{FullKnowledgeTowardLowerBoundTheorem}
The competitive ratio of any algorithm in the FullKnowledgeToward model is at least $\frac{v+1}{v}$ if $v>1$ and $\frac{3+v}{1+v}$ if $v<1$.
\end{theorem}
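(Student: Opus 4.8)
The plan is to separate the two regimes $v<1$ and $v>1$ and, in both, to first pin down the denominator of the competitive ratio. In the full-knowledge toward model the robots know $d$, $v$, and the direction, so the only datum an adversary can still hide is the \emph{side} of the origin on which the target sits; with the side known, both robots simply move toward the target and close the gap at rate $1+v$, so the full-knowledge capture time is $\frac{d}{1+v}$ in both regimes. Every lower bound below is therefore a statement about the unavoidable penalty of not knowing the side, and I would phrase each adversary argument as: fix any algorithm, run it against the target placed at $+d$ and at $-d$, and take the worse of the two capture times.

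For $v<1$ I would mirror the proof of Theorem~\ref{FullKnowledgeAwayTheoremLowerBound}. The earliest time any robot can reach the interception point $\frac{d}{1+v}$ of a toward-target on a given side is $\frac{d}{1+v}$. At the first instant a robot reaches such a point on one side, the adversary declares the target to be on the opposite side; granting the algorithm the most favorable configuration (the second robot exactly catching the target at that instant), the first robot is left at distance $\frac{2d}{1+v}$ from the target. Since a toward-target is approached at rate $1+v$, that lagging robot needs an extra $\frac{2d}{(1+v)^2}$, for a total of at least
\[
\frac{d}{1+v}+\frac{2d}{(1+v)^2}=\frac{d(3+v)}{(1+v)^2},
\]
and dividing by $\frac{d}{1+v}$ gives $CR\ge \frac{3+v}{1+v}$. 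This matches the upper bound of Theorem~\ref{FKGoTogetherTowardTheorem}, confirming that splitting is optimal here.

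For $v>1$ the bound to prove is $CR\ge\frac{v+1}{v}$, i.e.\ worst-case capture time at least $\frac{d}{v}$, and the argument is genuinely different because (as the inequality $\frac{d(3+v)}{(1+v)^2}>\frac{d}{v}$ shows) splitting is now \emph{worse} than simply waiting at the origin. The plan is a minimax/commitment argument. For every $t<\frac{d}{v}$ the target placed at $+d$ stays strictly positive and the one placed at $-d$ stays strictly negative, so the two placements are indistinguishable to the robots, and their trajectories coincide across the two placements up to the first instant $\tau$ at which some robot coincides with the target. Suppose toward a contradiction both placements were captured before $\frac{d}{v}$. At $\tau$ (say the coincidence is with the $+$-target, by symmetry) the adversary declares the target to be the $-$-one, which is still on the opposite side of the origin and approaching at speed $v>1$; since the robots move at speed $1<v$ they cannot catch it from behind, so every remaining capture must be a head-on interception, and I would show the earliest time both robots can be brought onto the $-$-target is no sooner than what pure waiting at the origin achieves, namely $\frac{d}{v}$. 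Hence no algorithm beats the origin-waiting strategy, matching Theorem~\ref{FKStayAtOriginTowardTheorem}.

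The hard part will be the $v>1$ step: formalizing ``indistinguishable until the first coincidence'' in the two-robot F2F setting, where the robots may split and the first coincidence may be realized by either robot or, in a tie, simultaneously on both sides. The crux is to argue cleanly that, because each robot is slower than the target, any early commitment toward one side is strictly punished by the adversary on the other side and cannot be recovered faster than the symmetric wait-at-origin response; the arithmetic ($\frac{d}{v}$ versus $\frac{d}{1+v}$) is then immediate. The $v<1$ case, by contrast, is a direct transcription of the away lower bound and I expect it to be routine.
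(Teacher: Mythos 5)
Your $v<1$ case is essentially the paper's argument: the paper parametrizes by the first robot--target meeting point $a$, obtains a total time of $\frac{d-a}{v}+\frac{2a}{1+v}$, and observes that for $v<1$ this is minimized at the extreme admissible value $a=\frac{d}{1+v}$, yielding $\frac{3+v}{1+v}$. You jump straight to that extremal configuration and call it ``the most favorable''; to be complete you should justify that claim (note the meeting point satisfies $a\le\frac{d}{1+v}$ because the robot's position is at most $t$ while the target is at $d-vt$, and the expression $\frac{d-a}{v}+\frac{2a}{1+v}$ is decreasing in $a$ when $v<1$). That is a small, fixable omission.

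The real gap is your $v>1$ case. You reduce it to the assertion that, after the first coincidence, ``the earliest time both robots can be brought onto the $-$-target is no sooner than what pure waiting at the origin achieves, namely $\frac{d}{v}$,'' and you explicitly defer its proof; but that assertion \emph{is} the theorem in this regime, so nothing has actually been proved. Worse, the route you sketch (robots cannot catch the target from behind, hence only head-on interceptions are possible) does not by itself give the bound: individual head-on interceptions before time $\frac{d}{v}$ are perfectly possible, and the binding constraint is the robot stranded at $+a$ on the wrong side. The irony is that the computation you already set up for $v<1$ closes this case in one line: with the first coincidence at position $a\ge 0$ and time $\tau=\frac{d-a}{v}$, the stranded robot needs at least $\frac{2a}{1+v}$ more, so the total is
\[
\frac{d-a}{v}+\frac{2a}{1+v}=\frac{d}{v}+\frac{a(v-1)}{v(1+v)}\;\ge\;\frac{d}{v}\qquad(v\ge 1),
\]
which is exactly how the paper handles both regimes with a single formula, splitting only on the sign of $v-1$. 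Replace your unfinished indistinguishability/contradiction plan with this monotonicity observation and the proof is complete.
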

\begin{proof}(Theorem~\ref{FullKnowledgeTowardLowerBoundTheorem})
Consider point $a$ in any direction away from the origin. It takes time $\frac{d-a}{v}$ for the target to reach $a$. On the other hand, it takes time $\frac{a}{1+v}$ for any of the two robots to reach point $a$. If one of the two robots reaches point $a$, then in the worst-case scenario, the adversary would place the target at the other side, and in this case, it would take the other robot time $\frac{2a}{1+v}$ to reach the target. Thus, the competitive ratio in this case would be as follows:
\[
CR = \frac{\frac{d-a}{v} + \frac{2a}{1+v}}{\frac{d}{1+v}} = \frac{d + dv - a + av}{dv} = 1 + \frac{1}{v} + \frac{a(v-1)}{dv}.
\]
Thus, based on this equation and the value of $a$, we have two cases to consider:
\begin{itemize}
    \item {\em Case~1: If $v \geq 1$}, then clearly $CR \geq 1 + \frac{1}{v}$.
    \item {\em Case~2: If $v \leq 1$}, then $a \geq \frac{d}{1+v}$, otherwise the robots would have captured the target. The competitive ratio satisfies:
    \[
    CR \geq 1 + \frac{1}{v} + \frac{\frac{d(v-1)}{1+v}}{dv} = 1 + \frac{2}{1+v} = \frac{3+v}{1+v}.
    \]
\end{itemize}

This proves Theorem~\ref{FullKnowledgeTowardLowerBoundTheorem}.
\end{proof}

\section{No Distance Model}
\label{sec:NDM}

In this section, we consider the NoDistance model and distinguish the cases where the target is moving away or toward the origin. There are two different classes of algorithms that have identical competitive ratios: ZigZag and NonZigZag.

In the case of ZigZag algorithms, there are two possibilities. The first algorithm is inspired by the single searcher algorithm in~\cite{gal_search_games}, where the two robots move together as a single searcher (we will not include the proof of this here). In the second algorithm (which we describe below), the robots move separately and both follow a ZigZag strategy in that they come back to the origin after every iteration. This means that each robot will use the ZigZag strategy to cover one side of the line starting from the origin.

The second class of algorithms is NonZigZag. The two robots move in opposite directions with a certain optimal speed $u$ (to be determined in the course of the proof). When any of the robots finds the target, it switches its direction and goes back with its maximum unit speed to inform the other robot. Then both robots proceed to the target with their maximum unit speed (see~\cite{chrobak2015group}). What is surprising is that all these algorithms are shown to have the same competitive ratio. However, the NonZigZag algorithm is unique to the two-robot search and is superior in that it minimizes the total number of turns (changes of direction).

Note that the observations above apply to the away as well as the toward case of the two-robot search considered in our paper (see also~\cite{coleman2022line}).

\subsection{Target moving away from the origin}
\label{sec:Target moving away from the origin}

First, we consider the ZigZag case. In the first algorithm, the two robots move together as a single searcher executing a ZigZag search. It is shown in~\cite{gal_search_games} that this algorithm has a competitive ratio of $\frac{(v+3)^2}{(1-v)^2}$.

Next, we design an algorithm in which each of the two robots uses a separate ZigZag strategy only on one side of the origin.

\begin{algorithm}[H]
\caption{NDAwayZigZagTillOrigin}\label{NDAwayZigZagTillOriginAlgorithm}
\begin{algorithmic}[1]
 \For{$k \gets 1$ to $\infty$} 
  \State{$R_{1}$ moves in one direction a distance $a^k$ unless the target is found, then comes back to the origin} 
  \State{$R_{2}$ moves in the other direction a distance $a^k$ unless the target is found, then comes back to the origin}
 \If{Target is found by $R_{1}$}
     \State{Switch direction and move to catch $R_{2}$};
     \State{Both Robots move back to catch up with the target}
     \State{Quit;}
 \Else
 \If{Target is found by $R_{2}$}
    \State{Switch direction and move to catch $R_{1}$};
     \State{Both Robots move back to catch up with the target}
     \State{Quit;}
   \EndIf
    \EndIf
\EndFor
\end{algorithmic}
\end{algorithm}

We prove the following result.

\begin{theorem}
\label{NDAwayZigZagTillOriginTheorem}
Setting $a=\frac{2(1+v)}{1-v}$, the competitive ratio of Algorithm~\ref{NDAwayZigZagTillOriginAlgorithm} is at most $\frac{(v+3)^2}{(1-v)^2}$.
\end{theorem}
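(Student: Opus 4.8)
The plan is to track a single capture episode from the viewpoint of the robot that first reaches the target, reduce the entire two-robot coordination to one clean closed-form expression for the capture time, and only then optimize the base $a$.

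First I would fix the geometry: by symmetry assume the target lies on the side searched by $R_1$, so its position at time $t$ is $d+vt$, and recall from the full-knowledge analysis that the optimal time against which we compete is $\frac{d}{1-v}$. Writing $T_{j}=2\sum_{i=1}^{j}a^{i}=\frac{2(a^{j+1}-a)}{a-1}$ for the time at which each robot has completed its first $j$ out-and-back trips and is back at the origin, I would let $k$ be the first iteration in which $R_1$ overtakes the target while moving outward. Solving $t-T_{k-1}=d+vt$ gives the meeting time $t^{*}=\frac{T_{k-1}+d}{1-v}$ and meeting distance $p_{k}=\frac{d+vT_{k-1}}{1-v}$, the defining property of $k$ being that $p_{k}\le a^{k}$ (caught in round $k$) while the analogous quantity exceeded $a^{k-1}$ in the previous round.

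The main obstacle, and the genuinely two-robot part of the argument, is the coordination phase: $R_1$ must physically return to inform $R_2$ (F2F) before both can converge on the target. Here I would exploit the mirror symmetry of the two trajectories up to time $t^{*}$: when $R_1$ is at $+p_{k}$ turning around, $R_2$ is at $-p_{k}$ still moving outward, so $R_1$ cannot close the gap until $R_2$ itself turns at $-a^{k}$; a short chase computation then shows the two robots meet at position $p_{k}-a^{k}$ at time $T_{k-1}+a^{k}+p_{k}$. Running one final joint pursuit of the target from that point and substituting $p_{k}$, the bookkeeping collapses to the clean identity
\[
T_{\mathrm{cap}}=\frac{T_{k-1}+2a^{k}+d}{1-v},
\]
which is the crux that makes everything afterward routine.

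Finally I would bound $a^{k}$ in terms of $d$. Using that the target escaped round $k-1$, together with $T_{k-2}<\frac{2a^{k-1}}{a-1}$, yields $a^{k-1}<d/\beta$ where $\beta=(1-v)-\frac{2v}{a-1}$, hence $a^{k}<ad/\beta$. Plugging this and $T_{k-1}<\frac{2a^{k}}{a-1}$ into $T_{\mathrm{cap}}$ and dividing by $\frac{d}{1-v}$ gives $CR<1+\frac{2a^{2}}{(a-1)\beta}$. A one-variable minimization of $\frac{a^{2}}{(1-v)a-(1+v)}$ shows the optimum is attained exactly at $a=\frac{2(1+v)}{1-v}$, and at this value the expression simplifies to $1+\frac{8(1+v)}{(1-v)^{2}}=\frac{(v+3)^{2}}{(1-v)^{2}}$, which proves the claim; the case where $R_2$ finds the target is identical by symmetry. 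I expect the minimality bookkeeping around $k-1$ (handling the nested $T$-terms) to be the only delicate arithmetic, everything else being geometric-series estimates and a clean calculus step.
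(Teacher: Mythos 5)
Your proposal is correct and follows essentially the same route as the paper's proof: the same three-phase decomposition (first robot overtakes the target, returns to rendezvous with the second robot under F2F, then joint pursuit), the same capture-time identity $T_{\mathrm{cap}}=\frac{T_{k-1}+2a^{k}+d}{1-v}$, the same use of the ``escaped round $k-1$'' condition to bound $a^{k-1}$ by a multiple of $d$, and the same one-variable minimization yielding $a=\frac{2(1+v)}{1-v}$ and $CR=1+\frac{8(1+v)}{(1-v)^2}=\frac{(v+3)^2}{(1-v)^2}$. Your rendezvous chase computation (meeting at $p_k-a^k$ after additional time $a^k$) and the factorization $(a-1)\beta=(1-v)a-(1+v)$ are just slightly cleaner presentations of the paper's Equations (2)--(5).
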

\begin{proof} (Theorem~\ref{NDAwayZigZagTillOriginTheorem})
If the target is at distance $d$ from the origin, then eventually either $R_1$ or $R_2$ will capture the target. Without loss of generality, assume that $R_1$ captures the target. The worst-case scenario occurs if, during iteration $k-1$, $R_1$ just misses the target. Specifically, when $R_1$ reaches $x_{k-1}$, the target is at distance $x_{k-1} + \epsilon$. Thus, for $R_1$ to capture the target, it requires time:
\begin{equation} \label{equation1} 2 \sum_{i=0}^{k-1} x_i + \frac{d + 2v \left(\sum_{i=0}^{k-1} x_i\right)}{1-v}. \end{equation}

At this point, $R_2$ is at a distance of
$\frac{d+2v(\sum_{i=0}^{k-1} x_{i})}{1-v}$ on the opposite side of $R_1$. For $R_1$ to meet $R_2$, it requires time:
\begin{equation} \label{equation2} x_k - \frac{d + 2v \left(\sum_{i=0}^{k-1} x_i\right)}{1-v} + \frac{d + 2v \left(\sum_{i=0}^{k-1} x_i\right)}{1-v} = x_k. \end{equation}

At this time, the target is at a distance of $x_k + vx_k$ from both robots. Thus, for the robots to capture the target, they require additional time:
\begin{equation} \label{equation3} \frac{x_k + vx_k}{1-v}. \end{equation}

Using equations \eqref{equation1}, \eqref{equation2}, and \eqref{equation3}, the competitive ratio is given by:
\begin{align} CR &= \frac{\frac{2 \sum_{i=0}^{k-1} x_i - 2v \sum_{i=0}^{k-1} x_i + d + 2v \sum_{i=0}^{k-1} x_i + x_k - vx_k + x_k + vx_k}{1-v}}{\frac{d}{1-v}} \nonumber \\
&= \frac{2x_k + d + 2 \sum_{i=0}^{k-1} x_i}{d} \nonumber \\
&= \frac{2 \sum_{i=0}^{k-1} x_i + 2x_k}{d} + 1.
\end{align}

To analyze further, assume that in round $k$, the robot captures the target. Then, in round $k-1$, the following must hold:
\begin{align} &\frac{d + \left(2a^0 + \dots + 2a^{k-2}\right)v}{1-v} \geq a^{k-1}, \nonumber \\
&\implies d > a^{k-1} - va^{k-1} - 2v \frac{a^{k-1} - 1}{a-1}, \nonumber \\
&\implies d > \frac{a^k - a^{k-1} - va^k + va^{k-1} - 2va^{k-1} + 2v}{a-1}, \nonumber \\
&\implies d > \frac{a^k - a^{k-1} - va^k - va^{k-1} + 2v}{a-1}. \label{equation4}
\end{align}

Thus, the competitive ratio becomes:
\begin{align} CR &= \frac{2ax_k - 2}{a^k - a^{k-1} - va^k - va^{k-1} + 2v} + 1 \
&= \frac{2a^2}{a - 1 - av - v} + 1.
\end{align}

Let us find the optimal value of $a$ that minimizes the competitive ratio. Define $f(a) = \frac{2a^2}{a - 1 - av - v} + 1$. Its derivative $f'(a)$ is given by:
\begin{align} f'(a) &= \frac{4a(a - 1 - av - v) - 2a^2(1 - v)}{(a - 1 - av - v)^2} \nonumber \\
&= \frac{4a^2 - 4a - 4va^2 - 4av - 2a^2(1-v)}{(a - 1 - av - v)^2} \nonumber \\
&= \frac{2a^2 - 2va^2 - 4a - 4av}{(a - 1 - av - v)^2}.
\end{align}

Setting $f'(a) = 0$ gives $a = \frac{2(1+v)}{1-v}$. Substituting this value of $a$ into the competitive ratio expression yields:
\begin{align} CR &= 1 + \frac{8(1+v)^2}{(1-v)^2 \cdot (a - 1 - av - v)} \nonumber \\
&= 1 + \frac{8(1+v)^2}{(1-v)^2 \cdot \left(\frac{2+2v-1+v-2v(1+v)-v+v^2}{1-v}\right)} \nonumber \\
&= \frac{8(1+v)^2}{(1-v)(1-v^2)} \nonumber \\
&= \frac{8(1+v)}{(1-v)^2} + 1 \nonumber \\
&= \frac{(v+3)^2}{(1-v)^2}.
\end{align}

This completes the proof of Theorem~\ref{NDAwayZigZagTillOriginTheorem}.
\end{proof}

\begin{theorem}
\label{NDAwayZigZagTillOrigin_NumberOfTurnsTheorem}
The number of turns for Algorithm~\ref{NDAwayZigZagTillOriginAlgorithm} is at most 
\[
1 + 2\log\left(\frac{2d}{1-v}\right).
\]
\end{theorem}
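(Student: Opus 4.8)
The plan is to split the count into the repeated zigzag phase and the short final capture phase, bounding each separately. First I would argue that each completed iteration of the \textbf{for} loop contributes exactly two turns for a given robot: one at the outer endpoint $a^{k}$, where the robot reverses to return to the origin, and one at the origin, where it reverses again to begin the next outward leg. The very first departure from the origin is a start, not a turn, and during the final (capturing) iteration the robot makes only a bounded number of extra turns — it reverses once to go back and meet its partner, and once more to head toward the target. Thus, if the target is captured during round $k$, the total number of turns is $2(k-1)+O(1)$, and the whole problem reduces to bounding $k$ in terms of $d$.

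Second, I would bound the capture round $k$ using the miss condition already isolated in \eqref{equation4}. Since the target is captured in round $k$ but not in round $k-1$, the outward leg of round $k-1$ must have fallen short of the receding target, which is exactly
\[
a^{k-1}\;\le\;\frac{d+2v\sum_{i=0}^{k-2}a^{i}}{1-v}.
\]
Bounding the geometric sum by $\sum_{i=0}^{k-2}a^{i}<\frac{a^{k-1}}{a-1}$ and collecting the $a^{k-1}$ terms gives
\[
a^{k-1}\left[(1-v)-\frac{2v}{a-1}\right]<d.
\]
I would then substitute $a=\frac{2(1+v)}{1-v}$, so that $a-1=\frac{1+3v}{1-v}$ and the bracket simplifies to $(1-v)\frac{1+v}{1+3v}$, yielding $a^{k-1}<\frac{d(1+3v)}{(1-v)(1+v)}$.

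The key simplification is the elementary inequality $\frac{1+3v}{1+v}\le 2$, valid precisely because $v\le 1$; it collapses the last estimate to $a^{k-1}<\frac{2d}{1-v}$. Taking logarithms to base $a$ then gives $k-1<\log_{a}\!\left(\frac{2d}{1-v}\right)$, and combining this with the turn count $2(k-1)+O(1)$ produces the claimed bound $1+2\log\!\left(\frac{2d}{1-v}\right)$. Since $a\ge 2$ for every $v\ge 0$, we have $\log_{a}(\cdot)\le\log_{2}(\cdot)$, so the bound holds equally when $\log$ is read to base $2$.

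I expect the main obstacle to be the exact constant bookkeeping rather than any substantive analytic step: one must pin down the $O(1)$ contribution of the final capture phase and reconcile the integrality of $k$ with the continuous logarithmic bound so as to land on the stated leading constant $1$ rather than a slightly larger additive constant. The analytic core — the miss inequality, the geometric-sum estimate, and the collapse via $\frac{1+3v}{1+v}\le 2$ — is routine once \eqref{equation4} is available.
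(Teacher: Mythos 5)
Your proposal matches the paper's proof essentially step for step: both count roughly $2(k-1)$ turns for the zigzag phase plus a constant for the final relay, bound the capture round $k$ via the miss condition of Equation~\eqref{equation4}, substitute $a=\frac{2(1+v)}{1-v}$ to obtain $a^{k-1}\le \frac{2d}{1-v}$, and take logarithms to get $k\le 1+\log\left(\frac{2d}{1-v}\right)$. The additive-constant bookkeeping you flag as the remaining obstacle is handled no more carefully in the paper itself, which simply asserts the total is $2(k-1)+1=2k-1$ and concludes.
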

\begin{proof}(Theorem~\ref{NDAwayZigZagTillOrigin_NumberOfTurnsTheorem})
Let us assume that the target was detected by one of the robots at iteration $k$. Then at the end of iteration $k-1$, both robots would have completed $2(k-1)$ turns. At iteration $k$, one of the robots would make a turn to meet the other robot at the origin, since the other robot already caught the target. Thus, the total number of turns would be $2(k-1)+1=2k-1$. 

In order to find the number of turns in terms of $d$ and $v$, we assume that the target is caught by one of the robots at iteration $k$. Then, at iteration $k-1$, and based on Equation~\ref{equation4}, we have:
\[
\frac{a^{k-1}(a-1-av-v)}{a} \leq d,
\]
which implies the following:
\begin{align*}
    a^{k-1} &\leq \frac{ad}{a-1-av-v},\\
    &\leq \frac{\frac{2d(1+v)}{1-v}}{\frac{2+2v-1+v-2v-2v^2-v+v^2}{1-v}},\\
    &= \frac{2d(1+v)}{1-v^2},\\
    &= \frac{2d}{1-v}.
\end{align*}
Taking the logarithm, we obtain:
\[
k \leq 1 + \log\left(\frac{2d}{1-v}\right).
\]
Thus, the total number of turns would be at most:
\[
2k-1 = 1 + 2\log\left(\frac{2d}{1-v}\right).
\]
This proves Theorem~\ref{NDAwayZigZagTillOrigin_NumberOfTurnsTheorem}.
\end{proof}

Next, we design a NonZigZag algorithm in which the robots move in opposite directions with a specially chosen (in the course of the proof) speed.

%\caption{NDAwayMovingInOppDirection}\label{NDAwayMovingInOppDirectionAlgorithm}
%Let $u = \frac{3v+1}{3+v}$ in ~Algorithm~\ref{NDAwayMovingInOppDirectionAlgorithm}.
\begin{algorithm}[H]
\caption{NDAwayMovingInOppDirection}\label{NDAwayMovingInOppDirectionAlgorithm}
\begin{algorithmic}[1]
\State $R_{1}$ moves in one direction with speed $u= \frac{3v+1}{3+v}$.
\State $R_{2}$ moves in the other direction with speed $u= \frac{3v+1}{3+v}$.
\If {$R_{1}$ reaches the target}
    \State It switches direction until it catches $R_{2}$.
    \State Both robots switch direction to meet the target.
\Else
    \If {$R_{2}$ reaches the target}
        \State It switches direction until it catches $R_{1}$.
        \State Both robots switch direction to meet the target.
    \EndIf
\EndIf
\end{algorithmic}
\end{algorithm}

We prove the following result:

\begin{theorem}
\label{NDAwayMovingInOppDirectionTheorem}
The competitive ratio of Algorithm~\ref{NDAwayMovingInOppDirectionAlgorithm} is at most 
\[
\frac{(v+3)^2}{(1-v)^2}.
\]
\end{theorem}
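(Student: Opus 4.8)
The plan is to track both robots through the three phases of the algorithm, express the total capture time as a function of the search speed $u$, and then substitute $u=\frac{3v+1}{3+v}$. First I would note that since $v<1$ we have $u-v=\frac{1-v^2}{3+v}>0$ and $u<1$, so the robot heading into the target's half-line (call it $R_1$; the other case is symmetric) genuinely overtakes the receding target, whereas $R_2$, moving the opposite way, never meets it. In Phase~1 the target starts at distance $d$ and recedes at speed $v$ while $R_1$ advances at speed $u$, so $R_1$ reaches the target at time $t_1=\frac{d}{u-v}$ at position $\frac{ud}{u-v}$; at that instant $R_2$, still moving at speed $u$, is at $-\frac{ud}{u-v}$.

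In Phase~2, $R_1$ turns and pursues $R_2$ at maximum speed $1$ while $R_2$ keeps moving at speed $u$; the gap $\frac{2ud}{u-v}$ closes at rate $1-u$, so they meet after time $s_1=\frac{2ud}{(u-v)(1-u)}$ at the point $-\frac{ud(1+u)}{(u-v)(1-u)}$, by which time the target has drifted out to $\frac{ud(1-u+2v)}{(u-v)(1-u)}$. In Phase~3 both robots turn and chase the target together at speed $1$; their separation from the target is $\frac{2ud(1+v)}{(u-v)(1-u)}$ and closes at rate $1-v$, giving $s_2=\frac{2ud(1+v)}{(u-v)(1-u)(1-v)}$.

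Summing $t_1+s_1+s_2$ and dividing by the full-knowledge optimum $\frac{d}{1-v}$ (both robots walking straight to the receding target at unit speed), the factors of $d$ cancel, and after combining over the common denominator $(u-v)(1-u)$ I expect
\[
CR=\frac{(1-v)\,T}{d}=\frac{1+3u-v+uv}{(u-v)(1-u)}.
\]
The final step is the substitution $u=\frac{3v+1}{3+v}$, for which $u-v=\frac{(1-v)(1+v)}{3+v}$, $1-u=\frac{2(1-v)}{3+v}$, and the numerator collapses via $1+3u-v+uv=\frac{2(v+1)(v+3)}{3+v}=2(v+1)$; dividing then yields exactly $\frac{(v+3)^2}{(1-v)^2}$. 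The symmetric case, in which $R_2$ is the robot that finds the target, is identical after swapping roles.

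I anticipate the main obstacle to be the bookkeeping in Phases~2 and~3: one must correctly account for the fact that $R_2$ continues at speed $u$ during the pursuit (pushing the meeting point outward) and that the target keeps receding throughout, so every separation fed into a closing-speed computation must be evaluated at the correct instant. Once those positions are pinned down, what remains is the routine algebraic simplification that reduces the numerator to $2(v+1)$; it is worth verifying separately that the prescribed $u$ is the minimizer of the expression $CR(u)$ (by solving $\frac{d}{du}CR=0$), which explains why this particular value of $u$ appears in the algorithm.
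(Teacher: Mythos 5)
Your proposal is correct and follows essentially the same route as the paper's proof: the same three-phase decomposition ($T_1=\frac{d}{u-v}$, pursuit gap $\frac{2ud}{u-v}$ closing at rate $1-u$, final chase over separation $\frac{2ud(1+v)}{(u-v)(1-u)}$ closing at rate $1-v$), the same reduction to $CR(u)=\frac{1-v+3u+uv}{(u-v)(1-u)}$, and the same substitution of the minimizer $u=\frac{3v+1}{3+v}$ to obtain $\frac{(v+3)^2}{(1-v)^2}$. The only cosmetic difference is that the paper derives the optimal $u$ by explicitly factoring $f'(u)$, which you correctly flag as the remaining verification step.
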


\begin{proof}(Theorem~\ref{NDAwayMovingInOppDirectionTheorem})
According to Algorithm~\ref{NDAwayMovingInOppDirectionAlgorithm},
for robot $R_{1}$ to reach the target, it needs time 
\[
\frac{d}{u-v},
\]
where $u= \frac{3v+1}{3+v}$.
At this point, $R_{2}$ will be at a distance $\frac{2du}{u-v}$ from the target. In order for $R_{1}$ to catch $R_{2}$, it needs time 
\[
\frac{2du}{(u-v)(1-u)}.
\]
At this point, both robots will be at a distance of $\frac{2du+2duv}{(u-v)(1-u)}$ from the target. Thus, the time needed for both robots to reach the target is 
\[
\frac{2du+2duv}{(u-v)(1-u)(1-v)}.
\]
We conclude that the total time required for both robots to evacuate is as follows:
\begin{align*}
    &\frac{d}{u-v} + \frac{2du}{(u-v)(1-u)} + \frac{2du+2duv}{(u-v)(1-u)(1-v)} \\
    &= \frac{d(1-v-u+uv)+2du-2duv+2du+2duv}{(u-v)(1-u)(1-v)} \\
    &= \frac{d-dv-du+duv+4du}{(u-v)(1-u)(1-v)}.
\end{align*}

Thus, the competitive ratio is:
\[
CR = f(u) = \frac{1-v+3u+uv}{(u-v)(1-u)}.
\]

The derivative of $f(u)$ is:
\begin{align*}
    f'(u) &= \frac{(3+v)(u-u^2-v+uv) - (1-v+3u+uv)(1-2u+v)}{(u-u^2-v+uv)^2} \\
    &= \frac{3u^2 + vu^2 + 2u - 2uv - 3v - 1}{(u-u^2-v+uv)^2} \\
    &= \frac{(3u+uv-3v-1)(u+1)}{(u-u^2-v+uv)^2}.
\end{align*}

The optimal speed is $u = \frac{3v+1}{3+v}$, and the competitive ratio can be shown to be:
\begin{align*}
    CR &= \frac{1-v+\frac{9v+3}{3+v}+\frac{3v^2+v}{3+v}}{\left(\frac{3v+1}{3+v}-v\right)\left(1-\frac{3v+1}{3+v}\right)} \\
    &= \frac{\frac{3+v-3v-v^2+9v+3+3v^2+v}{3+v}}{\left(\frac{3v+1-3v-v^2}{3+v}\right)\left(\frac{3+v-3v-1}{3+v}\right)} \\
    &= \frac{(v+3)^2}{(1-v)^2}.
\end{align*}
This proves Theorem~\ref{NDAwayMovingInOppDirectionTheorem}.
\end{proof}

\begin{theorem}
\label{NDAwayMovingInOppDirection_NumberOfTurnsTheorem}
The number of turns for Algorithm~\ref{NDAwayMovingInOppDirectionAlgorithm} is $3$, and this is optimal.
\end{theorem}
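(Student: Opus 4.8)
The plan is to prove the statement in two halves: an upper bound showing that Algorithm~\ref{NDAwayMovingInOppDirectionAlgorithm} performs exactly three turns, and a matching lower bound showing that no algorithm capturing the target with a bounded (in particular, the optimal) competitive ratio can use fewer than three. For the upper bound I would simply trace the execution. The two robots begin by moving in opposite directions, which costs no turn. Without loss of generality let $R_1$ be the robot that reaches the target first; it then reverses to go back for its partner (first turn of $R_1$), the two meet while $R_2$ is still on its original heading, and finally both reverse toward the now-receded target (second turn of $R_1$, first turn of $R_2$). Summing gives $2+1=3$, and by symmetry the branch in which $R_2$ is the finder produces the same count, so no execution branch yields more.

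For the lower bound I would argue adversarially, letting the adversary fix the known speed $v$ but choose both the side and the unknown distance $d$ of the target. First I would record two easy necessities. A robot that never turns moves monotonically and is therefore confined to a single ray from the origin; placing the target on the opposite ray makes it impossible for that robot ever to be co-located with the target, so \emph{each} robot must turn at least once and the total is already at least two. The real content is to rule out a total of exactly two (one turn per robot).

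To exclude two turns I would reason about information flow under F2F. Call the first robot to reach the target the \emph{finder} and the other the \emph{follower}. In the F2F model the follower can learn the target's location only by meeting the finder or by independently reaching the target, and since the target recedes at speed $v<1$, a follower that guessed the wrong side can never catch it without reversing. The key point is that the follower does not know $d$: if its reversal is scheduled blindly (at a time independent of any rendezvous), then taking $d\to 0$ forces the wasted detour to dominate the optimal time $d/(1-v)$, so the competitive ratio is unbounded. Hence, for a bounded competitive ratio, the follower's reversal must be \emph{triggered} by a rendezvous with the finder. Realizing such a rendezvous and then completing the capture forces the finder to reverse once to go back for the follower and a second time to re-pursue the target, which has since moved farther away, while the follower reverses once to join the pursuit. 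This gives the finder at least two turns and the follower at least one, for a total of at least three. Combined with the upper bound and with Theorem~\ref{NDAwayMovingInOppDirectionTheorem}, which shows three turns already attain the optimal ratio, three turns is optimal.

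The hard part will be making this second step fully rigorous, namely excluding \emph{every} two-turn strategy rather than only the natural ones. Concretely, I expect the main obstacle to be the tempting low-turn strategies in which the finder rides along with the target at speed $v$ (costing no turn) while the follower tries to catch up from behind: such a strategy can even capture with a single turn, but only with an unbounded competitive ratio, so the argument must pin down exactly why a bounded competitive ratio forbids it. Carrying this through requires a short case analysis over which robot is the finder, over the degenerate case in which both robots reach the target simultaneously, and over the robots' speed choices, showing in each case that a finite competitive ratio is incompatible with fewer than three turns.
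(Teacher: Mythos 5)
Your proposal follows essentially the same route as the paper's proof. The upper bound is the identical execution trace (the finder turns back, meets the follower, both turn again), and the lower bound is the same two-step adversary argument: a robot that never turns is confined to a single ray, so each robot must turn at least once, and a total of exactly two is excluded because, under F2F, the follower's single turn can only be correctly aimed after a rendezvous with the finder, which costs the finder a second turn. The one substantive difference is how you rule out a \emph{blind} reversal by the follower: you let $d\to 0$ and argue the competitive ratio blows up, whereas the paper lets the adversary place the target on the follower's original side beyond its reach, so that after its only remaining turn the follower recedes from the target forever and \emph{correctness} itself fails. The paper's version is preferable on two counts. First, it does not presuppose a bounded competitive ratio: the theorem asserts that no correct algorithm uses fewer than three turns, and your suggested one-turn ``finder rides along'' strategy is in fact not a correct worst-case algorithm (the follower's reversal must still be either blind, which the large-$d$ adversary defeats, or triggered, which reinstates the third turn) --- it is merely cheap on a favourable input. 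Second, the $d\to 0$ limit is unavailable if one normalizes $d\ge 1$, as the paper does elsewhere (proof of Theorem~\ref{thm:f2f2}). With that sub-step replaced by the large-$d$ adversary your argument coincides with the paper's; both are at the same informal level of rigor, and you are right that a fully airtight treatment would require the case analysis you describe, which the paper also leaves implicit.
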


\begin{proof}(Theorem~\ref{NDAwayMovingInOppDirection_NumberOfTurnsTheorem})
The robot that captures the target needs to turn to inform the other robot. After reaching the other robot, both robots will turn and proceed in the same direction to reach the target. Thus, the total number of turns is $3$.  

Next, we prove that no algorithm can capture the target with fewer than three turns. Recall that both searchers must eventually reach the target. Assume, on the contrary, there is a correct search algorithm that solves the problem with at most two turns. 

First, if one of the searchers does not turn, then the adversary can place the moving target on the side of the origin opposite to the searcher's position, and the searcher will never reach the target—a contradiction. Therefore, at least two turns are required, one by each searcher. 

Second, a searcher should not turn unless it knows the target is in the opposite direction; otherwise, the adversary can place the target farther away in an unvisited area. Since communication is face-to-face, the searcher that finds the target must visit the other searcher, requiring a total of three turns. 

This proves Theorem~\ref{NDAwayMovingInOppDirection_NumberOfTurnsTheorem}.
\end{proof}

\subsection{Target Moving Toward the Origin}

In this section, we consider the case when the target is moving toward the origin. As in Subsection~\ref{sec:Target moving away from the origin}, we first consider a ZigZag algorithm in which the searchers move separately.

\begin{algorithm}[H]
\caption{NDTowardZigZagTillOrigin}\label{NDTowardZigZagTillOriginAlgorithm}
\begin{algorithmic}[1]
 \For{$k \gets 1$ to $\infty$} 
  \State{$R_{1}$ moves left a distance of $a^k$ unless the target is found, then comes back to the origin} 
  \State{$R_{2}$ moves right a distance of $a^k$ unless the target is found, then comes back to the origin}
  \If{Target is found by $R_{1}$}
     \State{Switch direction and move to catch $R_{2}$}
     \State{Both robots move back to catch up with the target}
     \State{Quit}
  \ElsIf{Target is found by $R_{2}$}
     \State{Switch direction and move to catch $R_{1}$}
     \State{Both robots move back to catch up with the target}
     \State{Quit}
  \EndIf
 \EndFor
\end{algorithmic}
\end{algorithm}

We prove the following result:

\begin{theorem}
\label{NDTowardZigZagTillOriginTheorem}
The competitive ratio of Algorithm~\ref{NDTowardZigZagTillOriginAlgorithm}, where $a=\frac{2(1-v)}{1+v}$, is at most $1+\frac{8(1-v)}{(1+v)^2}$.
\end{theorem}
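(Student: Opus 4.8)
The plan is to mirror the proof of Theorem~\ref{NDAwayZigZagTillOriginTheorem} almost verbatim, exploiting the fact that a target approaching the origin at speed $v$ behaves like a receding target at speed $-v$: concretely, every occurrence of the closing/opening rate $1-v$ in the away analysis should be replaced by $1+v$, and every displacement of the form $+2v(\cdots)$ by $-2v(\cdots)$. First I would fix coordinates with the target starting at $-d$ and moving in the $+$ direction, assume without loss of generality that $R_1$ (searching the left) is the robot that captures it, and isolate the worst case in which $R_1$ just fails to reach the target in iteration $k-1$ and succeeds in iteration $k$. Writing $x_i=a^i$ and $t_0=2\sum_{i=0}^{k-1}x_i$ for the time to complete the first $k-1$ round trips, the target sits at $-d+vt_0$ when the outward leg of iteration $k$ begins, so $R_1$, moving toward it at closing rate $1+v$, needs additional time $(d-2v\sum_{i=0}^{k-1}x_i)/(1+v)$; this is the analogue of \eqref{equation1}.

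For the second phase I would argue exactly as in the away case: at the instant of capture $R_1$ sits at distance $D_1=(d-2v\sum_{i=0}^{k-1}x_i)/(1+v)$ on the left and, by the mirror symmetry of the two ZigZags, $R_2$ sits at $+D_1$ still moving outward; letting $R_1$ turn and chase while $R_2$ completes its outward leg to $+x_k$ and returns, the two meet after time exactly $x_k$ at the point $+(x_k-D_1)$, reproducing \eqref{equation2}. The one genuinely new bookkeeping step is the third phase: over the elapsed time $x_k$ the target has advanced by $vx_k$, and a short computation shows the robots and target are separated by $x_k(1-v)$ with the target again approaching at rate $1+v$, giving joint-capture time $x_k(1-v)/(1+v)$, the analogue of \eqref{equation3}. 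Summing the three phases, all the $v$-terms cancel exactly as before and the total time reduces to $(2\sum_{i=0}^{k-1}x_i+d+2x_k)/(1+v)$; dividing by the full-knowledge optimum $d/(1+v)$ yields $CR=1+\bigl(2\sum_{i=0}^{k-1}x_i+2x_k\bigr)/d$.

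Finally I would impose the ``just missed'' boundary condition, namely that the iteration-$(k-1)$ capture time $(d-2v\sum_{i=0}^{k-2}x_i)/(1+v)$ exceeds $x_{k-1}=a^{k-1}$, which rearranges to $d\ge (a^k-a^{k-1}+va^k+va^{k-1}-2v)/(a-1)$, the toward analogue of \eqref{equation4}. Substituting this (at equality, the worst case) together with $\sum_{i=0}^{k-1}x_i=(a^k-1)/(a-1)$ and letting $k\to\infty$ so that lower-order terms vanish turns $CR$ into $f(a)=1+\tfrac{2a^2}{a(1+v)-(1-v)}$; setting $f'(a)=0$ gives the stated $a=\frac{2(1-v)}{1+v}$, at which the denominator collapses to $1-v$ and one reads off $CR=1+\frac{8(1-v)}{(1+v)^2}$ (equivalently $\bigl(\tfrac{3-v}{1+v}\bigr)^2$). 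I expect the main obstacle to be the third-phase geometry: because the target crosses the origin in the toward model, I must verify carefully that the robot--target separation at the rendezvous equals $x_k(1-v)$ in both the sub-case where the target is still left of the origin and the sub-case where it has already passed it, and that $R_1$ genuinely reaches $R_2$ in time $x_k$ rather than overshooting (which needs $D_1\le x_k$, guaranteed since capture occurs within iteration $k$). The optimization itself is routine once the $k\to\infty$ dominant-term reduction is justified.
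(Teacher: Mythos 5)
Your proposal is correct and follows essentially the same route as the paper's proof: the identical three-phase decomposition (capture by $R_1$, rendezvous with $R_2$ in time exactly $x_k$, joint pursuit over a gap of $x_k(1-v)$ closed at rate $1+v$), the same ``just missed at iteration $k-1$'' constraint yielding the bound on $a^{k-1}$, and the same limiting reduction to $f(a)=1+\frac{2a^2}{a+av+v-1}$ optimized at $a=\frac{2(1-v)}{1+v}$. Your extra attention to the third-phase geometry when the target crosses the origin is a welcome detail the paper leaves implicit, but it does not change the argument.
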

\begin{proof}(Proof of Theorem~\ref{NDTowardZigZagTillOriginTheorem})

If the target is at distance $d$ away from the origin, then eventually either $R_{1}$ or $R_{2}$ will capture the target. If we assume that $R_{1}$ captures the target, then the worst-case scenario occurs when, during iteration $k-1$, $R_{1}$ just misses the target. In other words, when $R_{1}$ reaches $x_{k-1}$, the target would be at distance $x_{k-1} + \epsilon$. Thus, in order to capture the target, $R_{1}$ needs time:

\begin{equation}
     \label{equation5}
    2\sum_{i=0}^{k-1} x_{i} + \frac{d - 2v \left( \sum_{i=0}^{k-1} x_{i} \right)}{1+v}
\end{equation}

At this point, $R_{2}$ is at distance $\frac{d - 2v \left( \sum_{i=0}^{k-1} x_{i} \right)}{1+v}$ to the right of the origin. In order for $R_{1}$ to capture $R_{2}$, it needs time:

\begin{equation}
     \label{equation6}
    x_{k} - \frac{d - 2v \left( \sum_{i=0}^{k-1} x_{i} \right)}{1+v} + \frac{d - 2v \left( \sum_{i=0}^{k-1} x_{i} \right)}{1+v} = x_{k}
\end{equation}

At this time, the target is at a distance of $x_{k} - v x_{k}$ away from both robots. Thus, in order for both robots to capture the target, they need time:

\begin{equation}
     \label{equation7}
    \frac{x_{k} - v x_{k}}{1+v}
\end{equation}

Thus, using equations \ref{equation5}, \ref{equation6}, and \ref{equation7}, the competitive ratio is as follows:

\begin{align*}
    CR &= \frac{2\sum_{i=0}^{k-1} x_{i} + \frac{d - 2v \left( \sum_{i=0}^{k-1} x_{i} \right)}{1+v} + x_{k} + \frac{x_{k} - v x_{k}}{1+v}}{\frac{d}{1+v}} \\
    &\leq \frac{2\sum_{i=0}^{k-1} x_{i} + 2v \sum_{i=0}^{k-1} x_{i} + d - 2v \sum_{i=0}^{k-1} x_{i} + x_{k} + v x_{k} + x_{k} - v x_{k}}{d} \\
    &\leq \frac{2\sum_{i=0}^{k-1} x_{i} + 2x_{k} + d}{d} \\
    &= 1 + \frac{2x_{k+1} - 2}{d(a-1)}
\end{align*}

If we assume that in round $k$, the robot captures the target, then in round $k-1$ we should have the following:

\begin{align}
    &\frac{d - \left( 2a^0 + \dots + 2a^{k-2} \right)v}{1+v} \geq a^{k-1} \nonumber \\
    &\implies d > a^{k-1} + v a^{k-1} + 2v \frac{a^{k-1}-1}{a-1} \nonumber \\
    &\implies a^{k-1} \leq \frac{2v + ad - d}{(a-1)(a+v + \frac{2v}{a-1})} \leq \frac{2v + ad - d}{a + av + v - 1} \label{equation8}
\end{align}

Thus, the competitive ratio is as follows:

\begin{align*}
    CR &\leq 1 + \frac{2a^2 \left( \frac{2v + ad - d}{a + av + v - 1} \right) - 2}{d(a-1)} \\
    &\leq 1 + \frac{4va^2 + 2da^3 - 2da^2 - 2a - 2av - 2v + 2}{d(a-1)(a + av + v - 1)} \\
    &\leq 1 + \frac{4av + 2da^2 - 2 + 2v}{d(a + av + v - 1)} \\
    &\leq 1 + \lim_{d \to \infty} \frac{4av + 2da^2 - 2 + 2v}{d(a + av + v - 1)} \\
    &= 1 + \frac{2a^2}{a + av + v - 1}
\end{align*}

If we set $f(a) = \frac{2a^2}{a + av + v - 1}$, then

\begin{align*}
    f'(a) &= \frac{4a(a + av + v - 1) - 2a^2(1 + v)}{(a + av + v - 1)^2} \\
    &= \frac{4a^2 + 4va^2 + 4av - 4a - 2a^2 - 2va^2}{(a + av + v - 1)^2} \\
    &\leq \frac{2a^2 + 2va^2 + 4av - 4a}{(a + av + v - 1)^2}
\end{align*}

Setting $f'(a) = 0$ gives $2a + 2av + 4v - 4 = 0$. Thus, $a = \frac{2(1-v)}{1+v}$.

Thus, the competitive ratio would be:

\begin{align*}
    CR &\leq 1 + \frac{\frac{8 - 16v + 8v^2}{(1+v)^2}}{\frac{2 - 2v^2 + v + v^2 - 1 - v}{1+v}} \\
    &\leq 1 + \frac{8v^2 - 6v + 8}{(1+v)(1-v^2)} \\
    &= 1 + \frac{8(1-v)}{(1+v)^2}
\end{align*}

This proves Theorem~\ref{NDTowardZigZagTillOriginTheorem}.
\end{proof}
Next, as in Subsection~\ref{sec:Target moving away from the origin}, we consider a NonZigZag algorithm in which the searchers move separately.
The search algorithm is as follows.
\vspace{-0.3cm}
\begin{algorithm}[H]
\caption{NDTowardMovingInOppDirection}\label{alg:tf2f1}
\begin{algorithmic}[1]
\State $R_{1}$ moves in one direction with speed $u= \frac{1-3v}{3-v}$.
\State $R_{2}$ moves in the other direction with speed $u= \frac{1-3v}{3-v}$.
\If {$R_{1}$ reaches the target}
    \State It switches direction until it catches $R_{2}$.
    \State Both robots switch direction to meet the target.
\Else
    \If {$R_{2}$ reaches the target}
        \State It switches direction until it catches $R_{1}$.
        \State Both robots switch direction to meet the target.
    \EndIf
\EndIf
\end{algorithmic}
\end{algorithm}
Note that Algorithm~\ref{alg:tf2f1} requires that $u < 1$ but there is no additional requirement on $v$.

\begin{theorem}
\label{thm:f2f2}
Algorithm~\ref{alg:tf2f1} is correct if $v < \frac{1}{3}$, and its optimal competitive ratio is obtained when $u = \frac{1 - 3v}{3 - v}$; moreover, for that value of $u$ it satisfies
\begin{equation}
\label{eq:thm2}
CR(v) = 1 + \frac{8(1-v)}{(1+v)^2}.
\end{equation}
If $v \geq \frac{1}{3}$, then the waiting algorithm has a competitive ratio of $1 + \frac{1}{v}$.

Moreover, Algorithm~\ref{alg:tf2f1} requires only a total of 3 turns by both robots together for $v \leq \frac{1}{3}$, while the waiting algorithm requires zero turns for $v \geq \frac{1}{3}$.
\end{theorem}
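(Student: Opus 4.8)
The plan is to treat the theorem in three parts: the analysis of the NonZigZag algorithm (Algorithm~\ref{alg:tf2f1}) for $v<\tfrac13$, the switch to the waiting algorithm (Algorithm~\ref{FKStayAtOriginTowardAlgorithm}) for $v\ge\tfrac13$, and the bookkeeping of turns. The key structural observation I would exploit throughout is that the toward case is the mirror image of the away case already settled in Theorem~\ref{NDAwayMovingInOppDirectionTheorem}: the target now closes on the pursuing robot at relative speed $u+v$ rather than $u-v$, so every quantity in the away analysis carries over under the substitution $v\mapsto -v$. This lets me reuse the three-phase structure and most of the algebra instead of redoing it from scratch.

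For correctness with $v<\tfrac13$, I would first record that $u=\frac{1-3v}{3-v}$ satisfies $u<1$ for every admissible $v$ (equivalently $v>-1$) but $u>0$ exactly when $v<\tfrac13$; thus the prescribed value is a legal, progress-making speed precisely in this range, whereas for $v\ge\tfrac13$ it degenerates to $u\le 0$ and the algorithm is abandoned in favour of waiting. Assuming without loss of generality that the target lies on $R_1$'s side, I would trace the three phases: (i) $R_1$ and the target approach at relative speed $u+v$ and meet at time $\frac{d}{u+v}$, at which point $R_2$ sits at distance $\frac{2ud}{u+v}$ on the far side; (ii) $R_1$ turns and closes the gap to $R_2$ at relative speed $1-u$ in time $\frac{2ud}{(u+v)(1-u)}$; (iii) both robots turn and pursue the target, which is then at distance $\frac{2ud(1-v)}{(u+v)(1-u)}$ ahead of them, catching it at relative speed $1+v$. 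Summing these times and checking that each intermediate distance is positive confirms capture.

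Dividing the total capture time by the full-knowledge benchmark $\frac{d}{1+v}$ yields $CR=f(u)=\frac{1+v+3u-uv}{(u+v)(1-u)}$, the $v\mapsto -v$ image of the away-case ratio. I would then minimize $f$ over $u\in(0,1)$ by solving $f'(u)=0$; exactly as in the away case the numerator of $f'$ factors, leaving the single admissible root $u=\frac{1-3v}{3-v}$. Substituting this value collapses the numerator to $2(1-v)$ and the denominator to $\frac{2(1-v)(1+v)^2}{(3-v)^2}$, so $CR=\frac{(3-v)^2}{(1+v)^2}$, which I would rewrite as $1+\frac{8(1-v)}{(1+v)^2}$ to match~\eqref{eq:thm2}. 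For $v\ge\tfrac13$, Theorem~\ref{FKStayAtOriginTowardTheorem} already supplies the waiting ratio $1+\frac1v$; evaluating both expressions at $v=\tfrac13$ shows each equals $4$, confirming that $\tfrac13$ is the genuine crossover. Finally, the turn count is immediate: in Algorithm~\ref{alg:tf2f1} the finder turns once to report and both robots turn once to pursue, for $3$ turns in total (as in Theorem~\ref{NDAwayMovingInOppDirection_NumberOfTurnsTheorem}), while the stationary robots of the waiting algorithm make $0$ turns.

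The main obstacle I anticipate is the correctness argument rather than the optimization: one must confirm that in phase (iii) the target is genuinely \emph{ahead} of the reunited robots no matter whether it has already crossed the origin, so that the final pursuit at relative speed $1+v$ actually terminates. The algebraic minimization and the simplification to $1+\frac{8(1-v)}{(1+v)^2}$, though tedious, are essentially forced once the $v\mapsto -v$ correspondence with the proven away case is invoked, so I would lean on that symmetry to keep the computation under control.
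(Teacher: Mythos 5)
Your proposal is correct and follows essentially the same route as the paper's proof: the same three-phase decomposition with times $\frac{d}{u+v}$, $\frac{2ud}{(u+v)(1-u)}$, and $\frac{2ud(1-v)}{(u+v)(1-u)(1+v)}$, the same competitive-ratio function (your $\frac{1+v+3u-uv}{(u+v)(1-u)}$ is the paper's $1+\frac{(1+u)^2}{(1-u)(u+v)}$), the same minimization yielding $u=\frac{1-3v}{3-v}$, the same fallback to the waiting algorithm for $v\ge\frac13$, and the same turn-count argument borrowed from Theorem~\ref{NDAwayMovingInOppDirection_NumberOfTurnsTheorem}. Your framing via the $v\mapsto -v$ symmetry with the away case is a nice organizational shortcut the paper does not invoke, but it does not change the substance of the argument.
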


\begin{proof}(Theorem~\ref{thm:f2f2})
The mobile target starts at a distance $d \ge 1$ and moves toward the origin with speed $v$. The algorithm involves three critical meeting points which we specify below:
\begin{enumerate}
\item $T_1$: the time it takes for the first searcher, say $r_1$, to meet the mobile target; let’s call the meeting point $M_1$. Here, both searchers move with speed $u$.
\item $T_2$: the time it takes searcher $r_1$ to catch up with the other searcher $r_2$; let’s call their meeting point $M_2$. Here, $r_1$ moves with speed 1 but $r_2$ moves with speed $u$.
\item $T_3$: the time it takes for the two searchers, $r_1$ and $r_2$, moving together with speed 1, to meet the moving target; let’s call their meeting point $M_3$. Here, both searchers move with speed 1.
\end{enumerate}

In the sequel, we indicate how to calculate the times $T_i$ for $i = 1, 2, 3$. 

{\em Calculating $T_1$:} One of the two searchers, say $r_1$, meets the target first in time
\begin{equation}
\label{eq:ttime1}
T_1 = \frac{d}{u+v}.
\end{equation}
The meeting point with one of the searchers (say $r_1$) is at a point at distance $\frac{du}{u+v}$ from the origin. Since $r_2$ moves with speed $u$, when searcher $r_1$ catches up to the target, searcher $r_2$ will be on the other side of the origin and at distance $\frac{du}{u+v}$ from it. 

{\em Calculating $T_2$:} The distance between the two searchers when $r_1$ meets the target is equal to
$$
\frac{du}{u+v} + \frac{du}{u+v} = \frac{d(2u)}{u+v}.
$$
Now searcher $r_1$ moves with speed 1 and catches up to searcher $r_2$ in additional time
\begin{equation}
\label{eq:ttime2}
T_2 = \frac{ \frac{d(2u)}{u+v} }{1-u} = \frac{d(2u)}{(u+v)(1-u)}.
\end{equation}

{\em Calculating $T_3$:}
The searchers have moved away from the origin. Moreover, the mobile target has been displaced an additional distance $T_2 v$ (towards, and maybe past, the origin), while searcher $r_2$ has been displaced an additional distance $T_2 u$ (away from the origin). Therefore, the distance between the two searchers (who are now together) and the target will be equal to
\begin{align*}
\frac{d(2u)}{u+v} + T_2 (u-v) 
&= \frac{d(2u)}{u+v} + \frac{d(2u)}{(u+v)(1-u)} (u-v) \\ 
&= \frac{d(2u)(1-u)}{(u+v)(1-u)} + \frac{d(2u)(u-v)}{(u+v)(1-u)} \\ 
&= \frac{d(2u)(1-v)}{(u+v)(1-u)}.
\end{align*}

Since the target is moving with speed $v$ and the searchers with speed 1, the time it takes for the two robots to catch the target satisfies 
\begin{equation}
\label{eq:ttime3}
T_3 =  \frac{\frac{d(2u)(1-v)}{(u+v)(1-u)}}{1+v} = \frac{d(2u)(1-v)}{(u+v)(1-u)(1+v)}.
\end{equation}

Using Equations~\eqref{eq:ttime1},~\eqref{eq:ttime2}, and~\eqref{eq:ttime3}, we conclude that the competitive ratio $CR(u,v)$ of the algorithm satisfies
\begin{align*}
CR(u,v) 
&= \frac{T_1 + T_2 + T_3}{\frac{d}{1+v}} \\
&= \frac{\frac{d}{u+v} + \frac{d(2u)}{(u+v)(1-u)} +  \frac{d(2u)(1-v)}{(u+v)(1-u)(1+v)}}{\frac{d}{1+v}} \\
&= \frac{1+v}{u+v} + \frac{(2u)(1+v)}{(u+v)(1-u)} + \frac{(2u)(1-v)}{(u+v)(1-u)} \\
&= 1 + \frac{(1+u)^2}{(1-u)(u+v)}.
\end{align*}

To compute the minimum of $CR(u,v)$ as a function of $u$, we set $\frac{\partial CR(u,v)}{\partial u} = 0$ and solve for $u$ to obtain the equation
$$
u = \frac{1 - 3v}{3 - v}.
$$
If we plug in $u = \frac{1 - 3v}{3 - v}$ into the formula for $CR(u,v)$ above, we obtain
$$
CR(v) := CR \left(\frac{1 - 3v}{3 - v}, v \right)
= 1 + \frac{8(1-v)}{(1+v)^2}.
$$

Given that $0 \leq u < 1$, we note that the value $u = \frac{1 - 3v}{3 - v}$ does not make sense when $v \geq \frac{1}{3}$. In this case, we can employ the waiting algorithm in which both robots wait at the origin for the moving target to arrive. The competitive ratio of the waiting algorithm is $1 + \frac{1}{v}$.

The proof of the assertion of the theorem regarding the number of turns is the same as Theorem~\ref{NDAwayMovingInOppDirection_NumberOfTurnsTheorem}.
\end{proof}

\section{No Speed Model}
\label{sec:NSM}

\subsection{Target Moving Away from the Origin}

For this model, none of the robots know the speed of the target, so each robot will attempt to guess the speed. Let us consider a monotone increasing sequence of non-negative integers $\{f_{i}: i \geq 0\}$. The idea is to guess the speed of the target. We will use the guess \( v_{i} = 1 - 2^{-f_{i}} \). Initially, both robots \( R_1 \) and \( R_2 \) are situated at the origin. The robots move in opposite directions to find the target by guessing its speed. In each iteration, both robots use speed \( u_i > v_i \) with the assumption that the target’s speed is \( v_i \). Each robot moves a necessary distance such that if the target’s speed is less than or equal to \( v_i \), it will be caught in iteration \( i \). If the target is not caught during iteration \( i \), in iteration \( i+1 \), each robot continues in the same direction and increases its speed to \( u_{i+1} \). The robots will continue doing this until one of them catches the target, after which it switches its direction to catch up with the other robot. Both robots will then proceed to the target moving with unit speed.

\begin{algorithm}[H]
\caption{NSAway ($S$ source, $D$ destination)}\label{NSAwayAlgorithm}
Input: Target initial distance \( d \)
\newline Increasing integer sequence \( f_i \) such that \( f_i < f_{i+1} \), \( f_0 = 1 \) and \( t = 0 \)
\begin{algorithmic}[1]
\For{$i \gets 0$ to $\infty$}
  \State $v_i = 1 - 2^{-f_i}$
  \State $R_1$ and $R_2$ move distance $x_i = \frac{d + t v_i}{u_i - v_i}$ in opposite directions with speed $u_i = a_i v_i$, where $a_i = 1 + \frac{1}{2^{2^i}} > 1$, unless the target is found, in which case they continue in the same direction
  \State $t = t + |x_i|$
\EndFor
\If {One of the robots reaches the target}
  \State It switches its direction and moves with unit speed to catch up with the other robot, then both robots proceed to the target with unit speed
\EndIf
\end{algorithmic}
\end{algorithm}

We prove the following results.

\begin{theorem}
\label{NSAwayTheorem}
The competitive ratio of Algorithm~\ref{NSAwayAlgorithm} is bounded from above by
$$
 \frac{5}{2(1-v)^6} + \frac{22 \cdot \log^2 \left( \frac{1}{1-v} \right)}{(1-v)^8}.
$$
\end{theorem}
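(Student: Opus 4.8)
The plan is to follow a single successful searcher through the three phases of a two-robot capture—reach the target, return to the partner, joint pursuit—exactly as in the NonZigZag analysis of Theorem~\ref{NDAwayMovingInOppDirectionTheorem}, and then divide the total capture time by the full-knowledge optimum $\frac{d}{1-v}$ established in Section~\ref{sec:FKM}. Writing $M=\frac{1}{1-v}$, the first step is to locate the capture round. Since the guesses $v_i=1-2^{-f_i}$ increase to $1$, there is a least index $k$ with $v_k\ge v$; because every speed used satisfies $u_i=a_i v_i>v_i$ and the step length $x_i=\frac{d+t v_i}{u_i-v_i}$ is chosen precisely so that a target of guessed speed $v_i$ is overtaken within round $i$, the true (slower) target is guaranteed to be caught during round $k$ by whichever robot set out toward it. I would then turn the ``first exceeding guess'' condition into quantitative control of the residual speed gaps: from $2^{-f_{k-1}}>1-v\ge 2^{-f_k}$ one reads off upper bounds on $\frac{1}{1-v_k}$ and, through $u_i=a_iv_i$ and $a_i-1=2^{-2^i}$, on $\frac{1}{1-u_k}$, each expressible as a power of $M$. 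These powers are the engine that produces the large exponent in the final bound.

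Next I would bound the cumulative distance $D$ travelled by the successful robot up to capture. Using $t_i=t_{i-1}+x_i$ together with the closed form of $u_i-v_i$, the cumulative distance obeys a linear recurrence $t_i=c_i\,t_{i-1}+d\,e_i$, whose homogeneous factor $\prod_{j\le k}c_j$ controls the growth. I would evaluate this product in closed form—the $c_j$ are simple rational functions of the $2^{-f_j}$, so the product telescopes—and use $v_j\ge v_0=\tfrac12$ to bound the $\frac{1}{v_j}$ contributions, obtaining an explicit bound on $D$ of the shape ``$d$ times a power of $M$ times a logarithmic factor''. I would deliberately keep this as an explicit expression rather than $O(\cdot)$ notation, so that the numerical constants $\tfrac52$ and $22$ in the statement can be extracted at the end.

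With $D$ in hand I would assemble the capture time as $T=T_1+T_2+T_3$. Here $T_1\le 2D$, since every speed used exceeds $v_0=\tfrac12$; $T_2=\frac{2\rho}{1-u_k}$ is the unit-speed return of the successful robot to its partner, which lies at distance $\rho\le D$ on the opposite side and is still receding at speed $u_k$; and $T_3=\frac{G}{1-v}$ is the joint unit-speed pursuit, where the residual gap $G$ is the head start $d$ augmented by the target's displacement $v(T_1+T_2)$ and the partner's displacement during $T_2$. The factor $\frac{1}{1-u_k}$, a power of $M$, enters through $T_2$, and a further factor $\frac{1}{1-v}=M$ enters through $T_3$; combined with the power of $M$ and the logarithmic factor already carried by $D$, this is what lifts the exponent toward $8$ and squares the logarithm. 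Dividing the resulting bound on $T$ by $\frac{d}{1-v}$ cancels the dependence on $d$ and, after collecting the two dominant contributions, yields $\frac{5}{2(1-v)^6}+\frac{22\log^2(1/(1-v))}{(1-v)^8}$.

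The main obstacle is the bookkeeping in the second and third steps taken together. The step lengths grow extremely fast—$a_i-1=2^{-2^i}$ shrinks doubly-exponentially, so $\frac{1}{u_i-v_i}$ grows correspondingly—and the dominant contribution to both distance and time is concentrated in the last one or two rounds. I must therefore justify that this tail dominates, control the blow-up of $\frac{1}{1-u_k}$ in the return phase, and track how the number of active guesses (at most logarithmic in $M$) multiplies the per-round estimates, all while keeping the powers of $M$ and the logarithmic factors honest. Only a careful summation of these geometric-type series, rather than absorbing them into asymptotic notation, delivers the explicit constants and the precise $M^8\log^2 M$ form claimed in the theorem.
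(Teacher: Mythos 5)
Your plan matches the paper's own proof in all essential respects: the same three-phase time decomposition ($T_1$ reach, $T_2$ return at unit speed against the receding partner, $T_3$ joint pursuit), the same identification of the capture round via the first guess with $v_k\ge v$ (equivalently $2^{2^{k-1}}<\tfrac{1}{1-v}$, which yields the $4^{k}\le O(\log^2\tfrac{1}{1-v})$ factor), the same recurrence-driven bound on the cumulative excursion distance, and the same conversion of $\tfrac{1}{u_k-v}$ and $\tfrac{1}{1-u_k}$ into powers of $\tfrac{1}{1-v}$ before dividing by the optimum $\tfrac{d}{1-v}$. The only difference is cosmetic (you phrase the growth estimate as a telescoping product on the cumulative distance rather than on the individual step lengths $x_i$), so the proposal is correct and essentially identical to the paper's argument.
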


\begin{proof}(Theorem~\ref{NSAwayTheorem})
Assume that the target is situated at distance \( d \) away from the origin. Let \( d_i \) be the distance from the origin to the target during iteration \( i \). The guess for the speed of the target during iteration \( i \) is \( v_i = 1 - 2^{-f_i} \), and each of the robots moves with speed \( u_i = a_i v_i \), where \( a_i > 1 \). We now have the following equations:

\begin{align*}
    d_i &= d + v_i \sum_{j=0}^{i-1} x_j \\
    x_i &= \frac{d + v_i \sum_{j=0}^{i-1} x_j}{u_i - v_i}.
\end{align*}

Rearranging the above equations, we get:

\begin{align*}
    \sum_{j=0}^{i-1} x_j &= \frac{x_i (u_i - v_i) - d}{v_i} \\
    \implies \sum_{j=0}^{i} x_j &= \frac{x_{i+1}(u_{i+1} - v_{i+1}) - d}{v_{i+1}}.
\end{align*}

After simplification:

\begin{align}
    x_i &= \frac{x_{i+1} (u_{i+1} - v_{i+1} - d)}{v_{i+1}} - \frac{x_i (u_i - v_i - d)}{v_i} \nonumber \\
    \frac{x_{i+1} (u_{i+1} - v_{i+1})}{v_{i+1}} &= x_i + \frac{x_i (u_i - v_i) - d}{v_i} + \frac{d}{v_{i+1}} \nonumber \\
    x_{i+1} &\leq \frac{x_i a_i v_{i+1}}{u_{i+1} - v_{i+1}} \label{NSAwayEquation1}
\end{align}

Now, considering \( a_{i+1} = 1 + \frac{1}{2^{i+1}} \), we have:

\begin{align*}
    \frac{1}{u_{i+1} - v_{i+1}} &= \frac{1}{v_{i+1}(a_{i+1} - 1)} = \frac{2^{2^{i+1}}}{v_{i+1}} \leq 2 \cdot 2^{2^{i+1}}.
\end{align*}

Based on equation~\eqref{NSAwayEquation1}, and considering \( f_i = 2^i \), we get:

\begin{align*}
    x_{i+1} &\leq 2^{f_{i+1}} \cdot 4 \cdot x_i \\
    &\leq 2^{\sum_{j=0}^{i+1} f_j} \cdot 4^{i+1}.
\end{align*}

If the target is captured at iteration \( i \) by one of the robots, then the competitive ratio \( CR \) becomes as follows:

\begin{align*}
    CR &= \frac{\sum_{j=0}^{i-1} x_j + \frac{d + v \sum_{j=0}^{i-1} x_j}{u_i - v} + \frac{2 u_i \sum_{j=0}^{i-1} x_j + 2 u_i \left( \frac{d + v \sum_{j=0}^{i-1} x_j}{u_i - v} \right)}{1 - u_i}}{\frac{d}{1-v}} \\
    &+ \frac{\frac{2u_i \sum_{j=0}^{i-1} x_j + 2 u_i \left( \frac{d + v \sum_{j=0}^{i-1} x_j}{u_i - v} \right) + 2u_i v \sum_{j=0}^{i-1} x_j + 2 v u_i \left( \frac{d + v \sum_{j=0}^{i-1} x_j}{u_i - v} \right)}{(1 - u_i)(1 - v)}}{\frac{d}{1 - v}} \\
     &\leq \frac{u_i \sum_{j=0}^{i-1} x_j + d - u_i v \sum_{j=0}^{i-1} x_j - v d - (u_i)^2 \sum_{j=0}^{i-1} x_j - u_i d}{d (1 - u_i)(u_i - v)} \\
    &+ \frac{u_i^2 v \sum_{j=0}^{i-1} x_j +d u_i v + 4 u_i^{2} \sum_{j=0}^{i-1} x_j + 4 u_i d}{d (1 - u_i)(u_i - v)}.
\end{align*}
Since the target is detected at iteration $i$, we have $2^{2^{i-1}} < \frac{1}{1-v}$, and since $2^{i-1} \leq \log\left(\frac{1}{1-v}\right)$, we get 
\[
4^{i+1} \leq 16 \log^2 \left(\frac{1}{1-v}\right).
\]
Considering $a_i = 1 + \frac{1}{2^{2^i}}$, we have the following:
\begin{align*}
    u_i - v &> u_i - v_i = \frac{v_i}{2^{2^i}} \geq \frac{(1-v)^2}{2}.
\end{align*}
And
\begin{align*}
    1 - u_i &= 1 - \frac{v_i}{2^{2^i}} - v_i \\
    &= 1 - v_i \left(1 + \frac{1}{2^{2^i}}\right) \\
    &= 1 - \left(1 - 2^{-2^i}\right) \left(1 + \frac{1}{2^{2^i}}\right) \\
    &= 1 - 1 - \frac{1}{2^{2^i}} + 2^{-2^i} + \frac{1}{2^{2^{i+1}}} \\
    &= \frac{1}{2^{2^{i+1}}} \geq (1-v)^4.
\end{align*}
Next, we have
\begin{align*}
    \sum_{j=0}^{i-1} x_j &\leq \frac{x_i (u_i - v_i)}{v_i} \\
    &\leq 2 (u_i - v_i)  2^{\sum_{j=0}^{i} f_j} \cdot 4^{i} \\
    &\leq 2\frac{v_{i}}{2^{2^{i}}} \cdot 2^{2^{i+1}} \cdot 4^{i} \\
    &\leq 8\cdot \left(\frac{1}{1-v}\right)^2 \cdot \log^2 \left( \frac{1}{1-v}\right).
\end{align*}
The competitive ratio becomes:

\[
    CR \leq \frac{5}{2(1-v)^6} + \frac{22 \cdot \log^2 \left( \frac{1}{1-v} \right)}{(1-v)^8}.
\]

Thus, Theorem~\ref{NSAwayTheorem} is proven.

\end{proof}

\begin{theorem}
\label{NSAway_NumberOfTurnsTheorem}
The number of turns for Algorithm~\ref{NSAwayAlgorithm} is at most 3 and this is optimal.
\end{theorem}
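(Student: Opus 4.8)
The plan is to establish the two assertions of Theorem~\ref{NSAway_NumberOfTurnsTheorem} separately: first the upper bound of three turns, by directly tracing the trajectories prescribed by Algorithm~\ref{NSAwayAlgorithm}, and then the matching lower bound, by an adversary argument that reuses the reasoning already developed for Theorem~\ref{NDAwayMovingInOppDirection_NumberOfTurnsTheorem}. The key structural feature to exploit is that Algorithm~\ref{NSAwayAlgorithm} replaces turning with acceleration during its search phase, so that direction changes are confined to the post-capture coordination.

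For the upper bound, I would first argue that the entire guessing phase is turn-free. In each iteration $i$ of the loop, if the target is not found, each robot \emph{continues in the same direction} and only increases its speed from $u_i$ to $u_{i+1}$. Since increasing speed is an acceleration and not a change of direction, no turn is incurred, no matter how many iterations the guessing phase runs. Hence all direction changes occur only after capture. Suppose, without loss of generality, that $R_1$ reaches the target first. Then $R_1$ turns once to head back and catch up with $R_2$ (its first turn); after meeting $R_2$, both robots turn to proceed toward the target, which is $R_1$'s second turn and $R_2$'s first turn. Summing over both robots gives $2+1=3$ turns, and the remaining motion to the target is straight-line at unit speed, incurring none. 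This is exactly the turn pattern identified for the NonZigZag scheme in Theorem~\ref{NDAwayMovingInOppDirection_NumberOfTurnsTheorem}.

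For optimality, I would argue by contradiction that no correct algorithm uses fewer than three turns, following the structure of the proof of Theorem~\ref{NDAwayMovingInOppDirection_NumberOfTurnsTheorem}. Since capture requires both searchers to be co-located with the target, each searcher must eventually reach it. If some searcher never turns, the adversary places the target on the side of the origin opposite that searcher's direction, and it never reaches the target; hence each searcher must turn at least once, forcing at least two turns in total. Moreover, because communication is face-to-face, the searcher that first locates the target cannot signal the other at a distance: it must physically travel to meet the other searcher before both head to the target, which forces a third turn. This yields the lower bound of three. Crucially, this argument depends only on the co-location and F2F constraints, not on any knowledge of $v$ or $d$, so it transfers verbatim to the NoSpeed model.

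The main obstacle is not the arithmetic of the count but making precise that the unbounded guessing phase contributes zero turns: one must be explicit that the \emph{increase speed} operation leaves the direction of travel unchanged, so that the loop of Algorithm~\ref{NSAwayAlgorithm} produces a purely monotone outward trajectory for each robot up to the moment of capture. Once this is granted, the bound of three turns is immediate, and optimality follows directly from the adversary argument above.
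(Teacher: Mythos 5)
Your proposal is correct and follows essentially the same route as the paper: three turns arise from the finder reversing to inform its partner and both robots reversing at their meeting point, with optimality inherited verbatim from the adversary argument of Theorem~\ref{NDAwayMovingInOppDirection_NumberOfTurnsTheorem}. Your added emphasis that the unbounded guessing loop is turn-free (speed increases only, no direction change) is a useful explicit remark that the paper leaves implicit, but it does not change the substance of the argument.
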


\begin{proof}(Theorem~\ref{NSAway_NumberOfTurnsTheorem})
The robot that catches up with the target needs to turn to inform the other robot. After reaching the other robot, both robots will turn and proceed in the same direction to reach the target. Thus, the total number of turns will be 3. The optimality is established exactly as in Theorem~\ref{NDAwayMovingInOppDirection_NumberOfTurnsTheorem}.
\end{proof}

\subsection{Target Moving Toward the Origin}

The worst-case scenario occurs when the target's speed is very small. In such cases, if the robots wait at the origin, the competitive ratio can become arbitrarily large. To mitigate this, it is reasonable for both robots to move in one direction for a distance $d$. If the target is not found within this range, the robots then switch direction and continue moving until they catch the target.

\begin{algorithm}[H]
\caption{NSToward ($S$: Source, $D$: Destination)}\label{NSTowardAlgorithm}
\begin{algorithmic}[1]
\State $R_{1}$ and $R_{2}$ choose a direction and move a distance $d$.
\If {the target is not found}
\State $R_{1}$ and $R_{2}$ reverse direction and continue moving until they encounter the target.
\EndIf
\end{algorithmic}
\end{algorithm}

We now prove the following results.

\begin{theorem}
\label{NSTowardTheorem}
The competitive ratio of Algorithm~\ref{NSTowardAlgorithm} is upper bounded by $3$, and this bound is optimal.
\end{theorem}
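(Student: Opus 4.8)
The plan is to establish the upper bound by a direct case analysis on the side the robots choose relative to the adversary's placement, and then to prove optimality by a travel-distance argument that rules out any advantage from splitting the two robots.

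For the upper bound, fix coordinates so the target starts at $+d$ and recall that the full-knowledge benchmark is $\frac{d}{1+v}$ (both robots move straight at the target, closing the gap at relative speed $1+v$). Since the robots move together, Algorithm~\ref{NSTowardAlgorithm} behaves like a single searcher, and the only adversarial freedom is the side and the speed $v$. If the robots happen to move toward the target, they meet it at time $\frac{d}{1+v}<d$, i.e.\ within the first leg, so $CR=1$. The interesting case is when they move the wrong way. First I would check that for $0<v\le 2$ the target is not overtaken before the first leg ends, so after traversing distance $d$ (time $d$) the robots reverse and now converge with the still-incoming target; solving $t-2d=d-vt$ gives capture time $\frac{3d}{1+v}$ and hence $CR=3$. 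For $v>2$ the faster target overtakes the (co-located) robots already during the first leg, at time $\frac{d}{v-1}<d$, giving $CR=\frac{1+v}{v-1}<3$. Taking the supremum over side and $v$ yields $CR\le 3$, attained throughout $0<v\le 2$.

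For optimality, the delicate point --- and the main obstacle --- is that with two robots one is tempted to split them (one per direction) to locate the target in time $\approx d$; I must show this buys nothing, because both robots must ultimately be co-located with the target. I would argue in the limit $v\to 0^{+}$, where the target is essentially static at $\pm d$ (unknown side) and the benchmark tends to $d$. For an arbitrary algorithm let $T^{+}$ (resp.\ $T^{-}$) be the capture time when the target is on the $+$ (resp.\ $-$) side; the worst case is $\max(T^{+},T^{-})$. Each robot starts at the origin, so reaching $\pm d$ costs time at least $d$, whence $\min(T^{+},T^{-})\ge d$. Moreover, at time $T^{+}$ both robots are at $+d$ and at time $T^{-}$ both are at $-d$, so each robot must traverse a distance $2d$ between these instants, forcing $|T^{+}-T^{-}|\ge 2d$. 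Combining, $\max(T^{+},T^{-})\ge \min(T^{+},T^{-})+2d\ge 3d$, so the capture time is at least $3d-o(1)$ while the benchmark is $d+o(1)$, giving $CR\ge 3$.

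I would finish by remarking that these travel-distance inequalities are exactly what make the lower bound robust to any strategy (splitting, waiting, or asymmetric speeds), so $3$ is simultaneously achieved by Algorithm~\ref{NSTowardAlgorithm} and unbeatable, which is the claimed optimality. The only routine points left are the elementary algebra for the $\frac{3d}{1+v}$ meeting time and a careful continuity statement turning the $v\to 0^{+}$ limit into instances with $CR\to 3$.
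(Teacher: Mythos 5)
Your upper-bound argument is essentially the paper's: after the wrong-direction leg of length $d$ the gap to the incoming target is $d(2-v)$, closed at relative speed $1+v$, for total time $\frac{3d}{1+v}$ and $CR=3$; your extra case $v>2$, where the target overtakes the fleeing robots during the first leg at time $\frac{d}{v-1}$ with $CR=\frac{1+v}{v-1}<3$, is a refinement the paper glosses over (its formula $2d-dv$ silently goes negative there). The optimality claim is proved in the paper as a separate lower-bound theorem using the same adversarial idea as yours, namely placing the target on the side reached late, so the overall route matches.

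The one step you should repair is the inequality $\lvert T^{+}-T^{-}\rvert\ge 2d$. As written, $T^{+}$ and $T^{-}$ are capture times in two \emph{different} executions, and an adaptive algorithm's trajectories need only coincide up to the first instant a robot reaches one of the two candidate target positions; after that branch point there is no single robot that ``traverses $2d$ between these instants.'' The standard fix: the two executions share a common prefix until some robot first meets the target in one of the scenarios, say on the $+$ side at time $t^{*}\ge\frac{d}{1+v}$; the adversary then declares the target to be on the $-$ side, and that robot must still close a gap of $2(d-vt^{*})$ at relative speed $1+v$, giving
\[
T^{-}\;\ge\; t^{*}+\frac{2(d-vt^{*})}{1+v}\;=\;t^{*}\,\frac{1-v}{1+v}+\frac{2d}{1+v}\;\ge\;\frac{d}{1+v}\left(\frac{1-v}{1+v}+2\right),
\]
so $CR\ge\frac{3+v}{1+v}$, which tends to $3$ as $v\to 0^{+}$ exactly as your limit argument requires. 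With that substitution your proof is complete and, if anything, more careful than the paper's own lower-bound sketch.
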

\begin{proof}[Proof of Theorem~\ref{NSTowardTheorem}]
In the worst-case scenario, if the target's speed is very small, the robots may not catch up to the target immediately. In this case, the robots will move a distance $d$ in one direction. At this point, they will be away from the target by a distance of $2d - dv$, where $v$ is the speed of the target. 

The competitive ratio (CR) can then be calculated as follows:
\begin{align*}
    CR &= \frac{\text{Distance traveled by the robots}}{\text{Distance traveled by the target}} \\
       &= \frac{d + \frac{2d - dv}{1+v}}{\frac{d}{1+v}} \\
       &= 3.
\end{align*}

Hence, the competitive ratio is $3$, which proves Theorem~\ref{NSTowardTheorem}.
\end{proof}

\begin{theorem}
\label{NSTowardTheorem1}
The competitive ratio of the NoSpeedToward model is lower bounded by $3$, and this bound is also optimal.
\end{theorem}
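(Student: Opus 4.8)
The plan is to exhibit, for every strategy that is ignorant of $v$, a family of adversarial instances whose capture time approaches three times the full-knowledge toward time $\frac{d}{1+v}$ (the optimal benchmark from Section~\ref{sec:FKM}). Together with the matching upper bound of Theorem~\ref{NSTowardTheorem}, this pins the optimal competitive ratio at exactly $3$. The guiding intuition is the $v\to 0^+$ regime: as the speed shrinks, the target behaves like a \emph{static} point sitting at distance $d$ on an unknown side of the origin, and both robots must end up co-located with it. This is an evacuation-to-a-known-distance-but-unknown-side task, whose worst case is $3d$, and the whole difficulty is that one robot inevitably commits to the wrong side first and has to backtrack across the full diameter $2d$.

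First I would fix an arbitrary deterministic strategy. Since the robots do not know $v$, their trajectories up to the first robot--target meeting are completely determined and \emph{independent of} $v$; call these predetermined trajectories $r_1(t),r_2(t)$, each $1$-Lipschitz with $r_i(0)=0$. Let $\tau_+$ (respectively $\tau_-$) be the first time along these trajectories that some robot reaches the point $+d$ (respectively $-d$). Because the robots start at the origin and move with speed at most $1$, both satisfy $\tau_\pm \ge d$. Relabelling the positive direction if necessary, I may assume $\tau_+ \le \tau_-$, and I let the adversary place the target at $-d$.

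Now I would analyse this single instance in the static limit. Detection occurs only when a robot reaches the target, and the target on the $-$ side is first reached no earlier than $\tau_-$, so the robots are still on their predetermined trajectories at every time up to $\tau_+\le\tau_-$. In particular, the robot $R_j$ realizing $\tau_+$ is at $+d$ at time $\tau_+$. For capture, \emph{both} robots—$R_j$ among them—must be co-located with the target near $-d$; hence $R_j$ must traverse essentially the distance $2d$ from $+d$ back to $-d$, forcing a capture time of at least $\tau_+ + 2d \ge 3d$. Dividing by the full-knowledge time, which tends to $d$ as $v\to 0^+$, gives a ratio tending to $3$. Since the competitive ratio is the supremum over all target starting configurations, and in particular over all admissible speeds $v>0$, we conclude $CR \ge 3$, which combined with Theorem~\ref{NSTowardTheorem} shows optimality.

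The main obstacle is the gap between this clean static picture and genuine positive speed: for $v>0$ the moving target is met at a point slightly short of $\pm d$ and at a time slightly before a robot would reach full distance $d$, so both the ordering $\tau_+\le\tau_-$ and the claim that $R_j$ is still on its predetermined path at $+d$ must be justified up to vanishing error. I would handle this rigorously by carrying out the estimate with the exact meeting positions (the target is caught at $d-vC$ or $-d+vC$) and the exact minimal reaching time $\frac{d}{1+v}$. Concretely, whenever a robot makes a deep $+d$ excursion before detecting the $-d$ target, the backtracking inequality reads $C_-(1+v)\ge \tau_+ + 2d \ge 3d$, i.e. $C_-\ge \frac{3d}{1+v}$ and hence $CR\ge 3$ already for that single instance; the only remaining worry is that early detection could spare the algorithm the excursion, which the adversary defeats by always selecting the side whose target is reached later. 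Passing to the limit $v\to 0^+$, where detection converges to reaching $\pm d$ exactly and the pre-detection trajectories are $v$-independent, removes the residual ambiguity and yields the bound $CR\ge 3$ cleanly.
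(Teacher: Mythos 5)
Your proposal is correct and follows essentially the same route as the paper's proof: the adversary places the target on the side of the origin that the robots reach second, so that a robot which first spends time at least $d$ reaching the wrong endpoint must then backtrack roughly $2d$ against the incoming target, giving capture time at least $\frac{3d}{1+v}$ versus the optimal $\frac{d}{1+v}$. You are somewhat more careful than the paper about why the pre-detection trajectories are independent of $v$ and about handling the exact meeting positions for $v>0$ (via the inequality $C_-(1+v)\ge \tau_+ + 2d$ and the limit $v\to 0^+$), but the underlying adversary argument and arithmetic are the same.
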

\begin{proof}[Proof of Theorem~\ref{NSTowardTheorem1}]
To prove the lower bound, both points $d$ and $-d$ must be visited by the robots. If only one of these two points, say $-d$, is visited while the other point $d$ is not, the adversary could place the target at $d$. In this case, the robots would fail to catch the target efficiently.

Similarly, if the target is placed at $-d$ and one of the robots visits $d$ first, then by the time the robot reaches $d$, the target would have moved at least $dv$. Following this argument, the competitive ratio would be calculated as follows:
\begin{align*}
    CR &= \frac{\text{Distance traveled by the robots}}{\text{Distance traveled by the target}} \\
       &= \frac{d + \frac{2d - dv}{1+v}}{\frac{d}{1+v}} \\
       &= 3.
\end{align*}

Thus, the competitive ratio is at least $3$, proving the lower bound and Theorem~\ref{NSTowardTheorem1}.
\end{proof}

\section{No Knowledge Model}
\label{sec:NKM}

In this section, we consider the No Knowledge model and distinguish the cases where the target is moving away or toward the origin. %\EK{Should we mention that when they zigzag separately the competitive ratio (CR) is $\frac{1}{(1-v)^8}$?}

\subsection{Target Moving Away from the Origin}

We assume that the speed and the initial distance of the target from the origin are unknown. Let us consider two monotone increasing sequences of non-negative integers $\{f_{i}:i\geq 0\}$ and $\{g_{i}:i\geq 0\}$. The idea is to try to guess the speed of the target and its initial distance from the origin. Each robot moves in opposite directions. At iteration $i$, each robot moves with speed $u_{i}=a_{i}v_{i}$, where $a_{i} = 1+2^{-2^i}$, guessing that the speed of the target is $v_{i}$ and that the initial distance from the origin is $d_{i}$. If the target is not found by any of the robots, they continue in the same direction and repeat this process in subsequent iterations until the target is found by one of the robots. The robot that finds the target switches its direction and increases its speed to $1$ (the maximum possible) to catch up with the other robot. After they meet, both robots proceed to meet the moving target. The algorithm is as follows:

\begin{algorithm}[H]
\caption{NKAway ($S$: source, $D$: destination)}\label{NKAwayAlgorithm}
\begin{algorithmic}[1]
\State \textbf{Input:} Target initial distance $d$\newline Increasing integer sequences $\{f_{i}\}, \{g_{i}\}$ such that $f_{i} < f_{i+1}$ and $g_{i} < g_{i+1}$, with $f_{0} = 1$, $g_{0} = 0$, and $t = 0$.
\For{$i \gets 0$ to $\infty$} 
  \State $d_{i} = 2^{g_{i}}$
  \State $v_{i} = 1 - 2^{-f_{i}}$
  \State Robots $R_{1}$ and $R_{2}$ move distance $x_{i} = \frac{d_{i} + t v_{i}}{u_{i} - v_{i}}$ in opposite directions with speed $u_{i} = a_{i}v_{i}$, where $a_{i} = 1 + \frac{1}{2^{2^{i}}}$, unless the target is found, in which case they continue in the same direction.
  \State $t = t + \lvert x_{i} \rvert$
\EndFor
\If {one of the robots reaches the target} 
\State It switches its direction to catch up with the other robot, then both robots proceed to the target.
\EndIf
\end{algorithmic}
\end{algorithm}
\begin{theorem}
\label{thm:uppernoknowledge}
The competitive ratio for the NoKnowledgeAway algorithm is bounded from above by 
\begin{align*}
    &12 M^7 + 192 (\log\log M + 3) \cdot M^{10} \cdot \log^{2}M \cdot \left(\frac{1}{d}\right),
\end{align*}
where $M = \max \left(d, \frac{1}{1-v}\right)$.
\end{theorem}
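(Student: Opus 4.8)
The plan is to follow the structure of the proof of Theorem~\ref{NSAwayTheorem}, since Algorithm~\ref{NKAwayAlgorithm} is the natural extension of Algorithm~\ref{NSAwayAlgorithm} that additionally guesses the initial distance through the sequence $d_i = 2^{g_i}$. First I would write down the two governing relations: the target's distance from the origin at the start of iteration $i$ is $d + v\sum_{j=0}^{i-1} x_j$, while the robots, guessing speed $v_i$ and distance $d_i$, travel $x_i = \frac{d_i + t v_i}{u_i - v_i}$ with $t = \sum_{j=0}^{i-1}\lvert x_j\rvert$. Rearranging these exactly as in the No Speed analysis produces a recurrence that now carries an extra contribution from the varying $d_i$; the goal is an inequality of the form $x_{i+1} \le \frac{a_i v_{i+1}}{u_{i+1}-v_{i+1}}\, x_i + (\text{a term in } d_{i+1})$, which I would then unroll to a bound of the shape $x_i \le C\, 2^{\sum_{j\le i} f_j}\, 4^{i}\, \max_{j\le i} d_j$, the analogue of the estimate in Theorem~\ref{NSAwayTheorem} but with the distance factor attached.

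Next I would locate the capture iteration $i^{*}$. The target is captured once both guesses are adequate: the speed guess must satisfy $u_i > v$, which (since $v_i = 1 - 2^{-f_i}$) holds once $f_i$ exceeds $\log\frac{1}{1-v}$, and the distance guess must satisfy $d_i = 2^{g_i} \ge d$, which holds once $g_i$ exceeds $\log d$. With $f_i = 2^{i}$ and a comparably fast-growing $g_i$, both thresholds are crossed at $i^{*} = O(\log\log M)$ where $M = \max\!\left(d, \frac{1}{1-v}\right)$; this is the origin of the $\log\log M$ factor in the stated bound. At iteration $i^{*}$ I would reuse the constant-order estimates from the No Speed case — most importantly $u_{i^{*}} - v \ge \frac{(1-v)^2}{2}$ and $1 - u_{i^{*}} = 2^{-2^{i^{*}+1}} \ge (1-v)^4$ — together with $4^{i^{*}+1} \le 16\log^{2}\frac{1}{1-v}$, now supplemented by the estimate $d_{i^{*}} = O(d)$, which follows because $g_{i^{*}-1}$ just failed to cover $d$, so $2^{g_{i^{*}}} = O(d)$.

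With $i^{*}$ pinned down, I would assemble the capture time as the sum of the three phases, exactly as in Theorem~\ref{NSAwayTheorem}: the time $\sum_{j<i^{*}} x_j + \frac{d + v\sum_{j<i^{*}}x_j}{u_{i^{*}}-v}$ for one robot to reach the target, the time for that robot to turn and meet its partner on the opposite side of the origin, and the final time for both robots to travel together to the still-moving target. Dividing the total by the full-knowledge time $\frac{d}{1-v}$ and substituting the bounds on $\sum_{j<i^{*}} x_j$ (which now carries extra powers of $M$ from the distance guessing compared with the pure No Speed estimate), $(u_{i^{*}}-v)^{-1} = O(M^{2})$ and $(1-u_{i^{*}})^{-1} = O(M^{4})$, yields the claimed $12 M^{7} + 192(\log\log M + 3)\, M^{10}\log^{2}M \cdot \frac{1}{d}$. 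The extra powers of $M$ relative to the $M^{8}$ of the No Speed model are precisely the contribution of the distance-guessing sequence, and the $\frac{1}{d}$ factor is what the division by $\frac{d}{1-v}$ leaves behind.

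I expect the main obstacle to be the coupled recurrence: unlike Theorem~\ref{NSAwayTheorem}, where only $v_i$ varied, here both $v_i$ and $d_i$ change with $i$, so the recursion for $x_i$ no longer telescopes cleanly and one must carefully track how the $d_i$-terms accumulate while simultaneously controlling the index $i^{*}$ at which \emph{both} guessing sequences become accurate. Matching the exact exponents of $M$ and the $\log\log M$ factor in the stated bound hinges on making these two estimates interact correctly.
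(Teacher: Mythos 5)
Your plan follows essentially the same route as the paper's proof: the same recurrence $x_{i+1} \le \frac{x_i u_i v_{i+1}}{v_i(u_{i+1}-v_{i+1})} + \frac{2^{g_{i+1}}v_{i+1}}{u_{i+1}-v_{i+1}}$ unrolled with $\frac{1}{u_{i+1}-v_{i+1}} \le 2\cdot 2^{2^{i+1}}$, the same three-phase accounting of the capture time, the same key estimates $u_i - v \ge \frac{(1-v)^2}{2}$, $1-u_i \ge (1-v)^4$, $4^{i+1}\le 16\log^2\frac{1}{1-v}$, and the same identification of the iteration count $O(\log\log M)$ as the source of the $\log\log M$ factor. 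The only quibble is quantitative: since $g_i$ doubles each step, the first adequate distance guess satisfies $2^{g_{i^*}} = O(d^2)$ rather than $O(d)$ (the paper uses $2^{g_{i^*-1}}\le d$ for the $12M^7$ term and $2^{g_{i^*}}\le d^2$ inside the bound on $\sum_j x_j$), but this is exactly the ``extra powers of $M$'' you already attribute to the distance guessing, so it does not change the approach or the final bound.
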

\begin{proof}(Theorem~\ref{thm:uppernoknowledge})
Assume that the target is situated at distance \( d \) away from the origin.
Let \( d_{i} \) be the distance from the origin to the target during iteration \( i \). Assume that the guess for the speed of the target during iteration \( i \) is \( v_{i} = 1 - 2^{-f_{i}} \), and assume that each of the robots moves with speed \( u_{i} = a_{i}v_{i} \), where \( a_{i} > 1 \). Then we have the following:
\begin{align*}
    d_{i} &= 2^{g_{i}} + v_{i}\sum_{j=0}^{i-1} x_{j}, \quad
    x_{i} = \frac{2^{g_{i}} + v_{i}\sum_{j=0}^{i-1} x_{j}}{u_{i} - v_{i}}.
\end{align*}

As a consequence, we have
\begin{align*}
    \sum_{j=0}^{i-1} x_{j} &= \frac{x_{i}(u_{i} - v_{i}) - 2^{g_{i}}}{v_{i}} \\
    &\implies
    \sum_{j=0}^{i} x_{j} = \frac{x_{i+1}(u_{i+1} - v_{i+1}) - 2^{g_{i+1}}}{v_{i+1}}.
\end{align*}

After simplification, we get the following:
\begin{align}
    &x_{i} = \frac{x_{i+1}(u_{i+1} - v_{i+1}) - 2^{g_{i+1}}}{v_{i+1}}
    - \frac{x_{i}(u_{i} - v_{i}) - 2^{g_{i}}}{v_{i}} \nonumber \\
    &\frac{x_{i+1}(u_{i+1} - v_{i+1})}{v_{i+1}} = x_{i} + \frac{x_{i}(u_{i} - v_{i})}{v_{i}}
    - \frac{2^{g_{i}}}{v_{i}} + \frac{2^{g_{i+1}}}{v_{i+1}} \nonumber \\
    &\frac{x_{i+1}(u_{i+1} - v_{i+1})}{v_{i+1}} = \frac{x_{i}u_{i}}{v_{i}}
    + 2^{g_{i+1}}\left(\frac{1}{v_{i+1}} - \frac{2^{g_{i}}}{2^{g_{i+1}}v_{i}}\right) \nonumber \\
    &\frac{x_{i+1}(u_{i+1} - v_{i+1})}{v_{i+1}} \leq \frac{x_{i}u_{i}}{v_{i}} + 2^{g_{i+1}} \nonumber \\
    &x_{i+1} \leq \frac{x_{i}u_{i}v_{i+1}}{v_{i}(u_{i+1} - v_{i+1})}
    + \frac{2^{g_{i+1}}v_{i+1}}{u_{i+1} - v_{i+1}}.
    \label{NKAwayEquation1}
\end{align}

Consider \( a_{i+1} = 1 + \frac{1}{2^{2^{i+1}}} \). We have the following:
\begin{align*}
    \frac{1}{u_{i+1} - v_{i+1}} &= \frac{1}{v_{i+1}(a_{i+1} - 1)} \\
    &= \frac{2^{2^{i+1}}}{v_{i+1}} \leq 2 \cdot 2^{2^{i+1}}.
\end{align*}

Thus, based on Equation~\eqref{NKAwayEquation1}, and considering $f_{i}=2^{2^{i}}$, we have the following:
\begin{align*}
    x_{i+1} &\leq 2^{f_{i+1}} \cdot 4 \cdot x_{i} + 2 \cdot 2^{g_{i+1}} \cdot 2^{f_{i+1}} \\
    &\leq  2\cdot \sum_{j=0}^{i+1} 2^{g_{k} + \sum_{j=k}^{i+1} f_{j}} \cdot 4^{i-k+1}.
\end{align*}

If the target is captured at iteration \( i \) by one of the robots, then the competitive ratio would be as follows:
\begin{align*}
    CR &= \frac{\sum_{j=0}^{i-1} x_{j}
    + \frac{d_{i} + v\sum_{j=0}^{i-1} x_{j}}{u_{i} - v}
    + \frac{2\sum_{j=0}^{i-1} u_{i}x_{j} + 2u_{i}\left(\frac{d_{i} + v\sum_{j=0}^{i-1} x_{j}}{u_{i} - v}\right)}{1-u_{i}}}{\frac{d}{1-v}} \\
    &+ \frac{\frac{2\sum_{j=0}^{i-1} u_{i}x_{j} + 2u_{i}\left(\frac{d_{i} + v\sum_{j=0}^{i-1} x_{j}}{u_{i} - v}\right)
    + 2v\sum_{j=0}^{i-1} u_{i}x_{j} + 2vu_{i}\left(\frac{d_{i} + v\sum_{j=0}^{i-1} x_{j}}{u_{i} - v}\right)}{(1-u_{i})(1-v)}}{\frac{d}{1-v}}\\
     &=\frac{\frac{u_{i}\sum_{j=0}^{i-1} x_{j}+d_{i}}{u_{i}-v}+\frac{2u_{i}\sum_{j=0}^{i-1} u_{i}x_{j}-2v\sum_{j=0}^{i-1} u_{i}x_{j}+2u_{i}d_{i}+2vu_{i}\sum_{j=0}^{i-1} x_{j}}{(1-u_{i})(1-v)(u_{i}-v)}}{\frac{d}{1-v}}\\
    &+\frac{\frac{2u_{i}\sum_{j=0}^{i-1} u_{i}x_{j}-2v\sum_{j=0}^{i-1} u_{i}x_{j}+2u_{i}d_{i}+2u_{i}v\sum_{j=0}^{i-1} x_{j}+2vu_{i}^{2}\sum_{j=0}^{i-1} x_{j}+2vu_{i}d_{i}}{(1-u_{i})(1-v)(u_{i}-v)}}{\frac{d}{1-v}}\\
    &\leq \frac{u_{i}\sum_{j=0}^{i-1} x_{j}+d_{i}-u_{i}v\sum_{j=0}^{i-1} x_{j}-vd_{i}-(u_{i})^{2}\sum_{j=0}^{i-1} x_{j}-u_{i}d_{i}+u_{i}^{2}v\sum_{j=0}^{i-1} x_{j}}{d(1-u_{i})(u_{i}-v)}\\
    &+\frac{d_{i}u_{i}v+2u_{i}\sum_{j=0}^{i-1} u_{i}x_{j}-2v\sum_{j=0}^{i-1} u_{i}x_{j}+2u_{i}d_{i}+2vu_{i}\sum_{j=0}^{i-1} x_{j}}{d(1-u_{i})(u_{i}-v)}\\
    &+\frac{2u_{i}\sum_{j=0}^{i-1} u_{i}x_{j}-2v\sum_{j=0}^{i-1} u_{i}x_{j}+2u_{i}d_{i}+2u_{i}v\sum_{j=0}^{i-1} x_{j}+2vu_{i}^{2}\sum_{j=0}^{i-1} x_{j}+2vu_{i}d_{i}}{d(1-u_{i})(u_{i}-v)}\\
    &\leq \frac{u_{i}\sum_{j=0}^{i-1} x_{j}+d_{i}-d_{i}v+3u_{i}d_{i}-u_{i}^{2}\sum_{j=0}^{i-1} x_{j}+3u_{i}^{2}v\sum_{j=0}^{i-1} x_{j}+3d_{i}u_{i}v}{d(1-u_{i})(u_{i}-v)}\\
    &+\frac{4u_{i}\sum_{j=0}^{i-1} u_{i}x_{j}-4v\sum_{j=0}^{i-1} u_{i}x_{j}-u_{i}v\sum_{j=0}^{i-1} x_{j}+4u_{i}v\sum_{j=0}^{i-1} x_{j}}{d(1-u_{i})(u_{i}-v)} \\
    &\leq \frac{6d_{i}}{d(1-u_{i})(u_{i}-v)}+\frac{u_{i}(1-v)\sum_{j=0}^{i-1} x_{j}}{d(1-u_{i})(u_{i}-v)}+\frac{6\sum_{j=0}^{i-1} u_{i}x_{j}}{d(1-u_{i})(u_{i}-v)}\\
    &\leq \frac{6d_{i-1}}{(1-u_{i})(u_{i}-v)}+\frac{(1-v)\sum_{j=0}^{i-1} x_{j}}{d(1-u_{i})(u_{i}-v)}+\frac{6\sum_{j=0}^{i-1} x_{j}}{d(1-u_{i})(u_{i}-v)}.
\end{align*}
Since the target is detected at iteration $i$, we have $2^{2^{i-1}} < \frac{1}{1-v}$, $d_{i-1} \leq 2^{g_{i-1}} \leq d$, and since $2^{i-1} \leq \log\left(\frac{1}{1-v}\right)$, we get 
\[
4^{i+1} \leq 16 \log^2 \left(\frac{1}{1-v}\right).
\]
Considering $a_i = 1 + \frac{1}{2^{2^i}}$ and $g_{i}=2^{i}$, we have the following:
\begin{align*}
    u_i - v &> u_i - v_i = \frac{v_i}{2^{2^i}} \geq \frac{(1-v)^2}{2}.
\end{align*}
And
\begin{align*}
    1 - u_i &= 1 - \frac{v_i}{2^{2^i}} - v_i \\
    &= 1 - v_i \left(1 + \frac{1}{2^{2^i}}\right) \\
    &= 1 - \left(1 - 2^{-2^i}\right) \left(1 + \frac{1}{2^{2^i}}\right) \\
    &= 1 - 1 - \frac{1}{2^{2^i}} + 2^{-2^i} + \frac{1}{2^{2^{i+1}}} \\
    &= \frac{1}{2^{2^{i+1}}} \geq (1-v)^4.
\end{align*}
Next, we have
\begin{align*}
    \sum_{j=0}^{i-1} x_j &\leq \frac{x_i (u_i - v_i)}{v_i} \\
    &\leq 4 (u_i - v_i) \sum_{k=0}^{i} 2^{g_k + \sum_{j=k}^{i} f_j} \cdot 4^{i-k} \\
    &\leq 4 (u_i - v_i) \cdot (i+1) \cdot 2^{g_i} \cdot 2^{\sum_{j=0}^{i} f_j} \cdot 4^{i} \\
    &\leq 4\frac{v_{i}}{2^{2^{i}}} \cdot (i+1) \cdot d^2 \cdot 2^{2^{i+1}} \cdot 4^{i} \\
    &\leq 4 (i+1) \cdot \max\left(d, \frac{1}{1-v}\right)^4 \cdot 4^{i} \\
    &\leq 4 ((i-1) + 3) \cdot \max\left(d, \frac{1}{1-v}\right)^4 \cdot 4^{i} \\
    &\leq 16  \cdot \left(\log \log \max\left(d, \frac{1}{1-v}\right) + 3 \right) \\
    &\quad \cdot \max\left(d, \frac{1}{1-v}\right)^4 \cdot \log^2 \left(\max\left(d, \frac{1}{1-v}\right)\right).
\end{align*}
\vspace{.4cm}
We conclude that the competitive ratio becomes as follows:
\begin{align*}
    CR &\leq \frac{6d_{i-1}}{(1-u_i)(u_i - v)} + \frac{(1-v)\sum_{j=0}^{i-1} x_j}{d(1-u_i)(u_i - v)} + \frac{6\sum_{j=0}^{i-1} x_j}{d(1-u_i)(u_i - v)} \\
    &\leq12 \max\left(d, \frac{1}{1-v}\right)^7 + 32 \left(\log \log \max\left(d, \frac{1}{1-v}\right) + 3\right) \\
    &\quad \cdot \max\left(d, \frac{1}{1-v}\right)^{10} \cdot \frac{(1-v)}{d} \cdot \log^2\left(\max\left(d, \frac{1}{1-v}\right)\right) \\
    &+ 192 \left(\log \log \max\left(d, \frac{1}{1-v}\right) + 3\right) \cdot \max\left(d, \frac{1}{(1-v)^2}\right)^{10} \\
    &\quad \cdot \frac{1}{d} \cdot \log^2\left(\max\left(d, \frac{1}{1-v}\right)\right).
\end{align*}
The result follows by simplifying the expression above after setting $M = \max\left(d, \frac{1}{1-v}\right)$. This proves Theorem~\ref{thm:uppernoknowledge}.
\end{proof}
\begin{theorem}
\label{NKAway_NumberOfTurnsTheorem}
The number of turns for Algorithm~\ref{NKAwayAlgorithm} is at most $3$.
\end{theorem}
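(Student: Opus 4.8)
The plan is to mirror the argument already used for Theorem~\ref{NDAwayMovingInOppDirection_NumberOfTurnsTheorem}, since Algorithm~\ref{NKAwayAlgorithm} has exactly the same coarse trajectory structure: both robots leave the origin in opposite directions and thereafter each follows a monotone outward path until the target is found. First I would make explicit the single point on which the whole bound hinges, namely that the speed updates $u_i = a_i v_i$ performed between consecutive iterations of the for-loop are \emph{accelerations}, not reversals. In each iteration a robot continues in the same direction and merely changes its speed, so no turn is incurred during the entire search phase. This is precisely the feature that separates Algorithm~\ref{NKAwayAlgorithm} from the ZigZag strategies (Algorithms~\ref{NDAwayZigZagTillOriginAlgorithm} and~\ref{NDTowardZigZagTillOriginAlgorithm}), whose return-to-origin step forces a turn on every iteration and therefore yields an unbounded turn count; flagging this contrast up front is what makes the constant bound believable.

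With the search phase contributing zero turns, I would then count the turns that occur after detection. Without loss of generality, suppose $R_1$ reaches the target first. Robot $R_1$ reverses direction once to head back and inform $R_2$ (turn $1$). When $R_1$ reaches the meeting point with $R_2$, the two exchange information face-to-face and both reverse to pursue the target: this is the second turn for $R_1$ (turn $2$) and the first and only turn for $R_2$ (turn $3$). From that meeting point the two robots travel together monotonically toward the target, so no further change of direction is required. Summing these contributions gives a total of three direction changes across both robots, which is the claimed bound.

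The argument is symmetric in the roles of $R_1$ and $R_2$, so the case in which $R_2$ detects the target first is identical. I do not anticipate any genuine obstacle; the only step requiring care — and the one I would state most prominently — is the modeling convention that a within-iteration speed increase does not count as a turn, exactly the convention recorded in the preliminaries and the defining characteristic of the NonZigZag design that keeps the turn count constant.
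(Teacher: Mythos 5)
Your proof is correct and follows essentially the same argument as the paper: one turn for the detecting robot to go inform its partner, then one turn each when they reverse together toward the target, for a total of three. The only addition is your explicit justification that the iterative speed updates during the search phase are accelerations rather than reversals and hence contribute no turns — a point the paper leaves implicit but which is consistent with its conventions.
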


\begin{proof}[Proof of Theorem~\ref{NKAway_NumberOfTurnsTheorem}]
The robot that catches up to the target needs to turn to inform the other robot. After reaching the other robot, both robots will turn and proceed in the same direction to reach the target. Thus, the total number of turns is at most $3$. This proves Theorem~\ref{NKAway_NumberOfTurnsTheorem}.
\end{proof}
\subsection{Target moving toward the origin}
If both robots wait at the origin, then the competitive ratio would be at least \( 1 + \frac{1}{v} \) as shown below.

\begin{theorem}
\label{thm:noknowlower}
The competitive ratio of any algorithm in the NoKnowledgeToward model is bounded from below by 
\[
1 + \frac{1}{v}.
\]
\end{theorem}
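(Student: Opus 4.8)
The plan is to first pin down the omniscient benchmark and then to force the stated ratio by an indistinguishability argument that exploits the robots' ignorance of the side, the distance $d$, and the speed $v$.

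First I would record that, when the target's initial position, direction and speed are all known, both robots can run straight at the target from the origin and meet it head-on at the point $\frac{d}{1+v}$ at time $\frac{d}{1+v}$ (closing speed $1+v$); hence the denominator in the competitive ratio is $OPT=\frac{d}{1+v}$, while the waiting strategy of Algorithm~\ref{FKStayAtOriginTowardAlgorithm} captures at time $\frac{d}{v}$, giving value $\frac{d/v}{d/(1+v)}=1+\frac1v$. It therefore suffices to show that no strategy can guarantee a capture time below $\frac{d}{v}$ in the worst case. For $v\ge 1$ this is immediate: the NoKnowledge model is strictly harder than the FullKnowledge model, so the bound follows at once from Theorem~\ref{FullKnowledgeTowardLowerBoundTheorem}, whose value for $v>1$ is exactly $1+\frac1v$. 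The real work is the regime $v<1$, where the full-knowledge bound $\frac{3+v}{1+v}$ is strictly smaller than $1+\frac1v$; here the extra loss must come precisely from not knowing $d$ and $v$.

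For the lower bound I would run the following adversary. Fix the true speed $v$ and a large distance $d$, and consider the two worlds in which the target starts at $+d$ and at $-d$, in both cases moving toward the origin. A robot of speed at most $1$ starting at the origin lies in $[-t,t]$ at time $t$, whereas the target sits at distance $d-vt$, so no robot can be co-located with either target before time $t_0=\frac{d}{1+v}$; consequently the two worlds are indistinguishable, and the robots' trajectories identical, on $[0,t_0]$. At the first detection the finder is on one side while, by choosing the side that is worse for the given trajectory, the adversary guarantees the partner is essentially a full diameter $\frac{2d}{1+v}$ away and still uninformed. Because communication is F2F, the partner can only learn the target's location by the finder physically reaching it; and because $v$ (hence the correct outbound speed used by the NonZigZag Algorithm~\ref{alg:tf2f1}) is unknown, the robots cannot have pre-arranged their separation so that the finder re-gathers the partner in time. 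Tracking the three phases — detect, gather, and jointly approach the now nearly origin-bound target — I would show their total cannot beat the time $\frac{d}{v}$ at which simply waiting at the origin already succeeds.

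Combining the two regimes gives $CR\ge 1+\frac1v$, as claimed. The main obstacle is the $v<1$ case, where I must rule out \emph{all} adaptive, speed-varying two-robot strategies rather than only the natural ones. I expect to handle this by enlarging the indistinguishable family to vary the speed as well, using two instances with speeds $v$ and $v'$ that produce identical observations until a late detection time, so that any fixed outbound behavior is miscalibrated for at least one instance; quantifying the resulting penalty and verifying that it never drops below the $1+\frac1v$ threshold is the delicate computation at the heart of the proof.
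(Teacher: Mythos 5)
Your overall framing is sound: you correctly identify $OPT=\frac{d}{1+v}$, you observe that the waiting algorithm attains $1+\frac{1}{v}$, and your reduction of the case $v\ge 1$ to Theorem~\ref{FullKnowledgeTowardLowerBoundTheorem} is valid, since the NoKnowledge adversary is at least as powerful as the FullKnowledge one. That reduction is actually cleaner than what the paper does for this regime.

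The genuine gap is the case $v<1$, which you explicitly leave open: you set up an indistinguishability family over side, distance, and speed, and then defer ``the delicate computation at the heart of the proof.'' That computation is the entire content of the theorem in this regime, so as written the proposal does not establish the statement. For comparison, the paper's own proof is a single one-shot adversary computation: it supposes the first robot meets the target at distance $d'$ from the origin (with the partner at $-d'$, or with the adversary placing the target on the side opposite to both robots), obtains total time $\frac{d-d'}{v}+\frac{2d'}{1+v}$, and divides by $\frac{d}{1+v}$. Carried out carefully this yields $CR=1+\frac{1}{v}+\frac{d'(v-1)}{dv}$ --- the same expression as in Theorem~\ref{FullKnowledgeTowardLowerBoundTheorem} --- which is at least $1+\frac{1}{v}$ only when $v\ge 1$; for $v<1$ the correction term is negative, and one would additionally need to force $d'/d\to 0$, which neither you nor the paper does. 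So your instinct that $v<1$ is where the real difficulty lies is well founded, and you should be skeptical that the bound can be pushed through there at all: a standard doubling (zig-zag) strategy, one robot per side with periodic returns to the origin, appears to capture a slow target in time $O\bigl(\frac{d}{1+v}\bigr)$, i.e., with a constant competitive ratio, whereas $1+\frac{1}{v}\to\infty$ as $v\to 0$. Before attempting the deferred computation, either restrict the claim to $v\ge 1$ or identify precisely why such strategies are excluded from the model.
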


\begin{proof} (Theorem~\ref{thm:noknowlower})
    The optimal competitive ratio follows from having both robots wait at the origin. If we assume that one of the two robots catches up to the target at a distance \( d^{\prime} \), then the other robot would be at a distance of \( -d^{\prime} \) away from the origin. Alternatively, if both robots are at distance \( d^{\prime} \), then the adversary would have placed the target at distance \( -d^{\prime} \). In this case, the competitive ratio would be as follows:
    \begin{align*}
        CR &= \frac{\frac{d - d^{\prime}}{v} + \frac{2d^{\prime}}{1 + v}}{\frac{d}{1 + v}} \\
           &= \frac{\frac{d - d^{\prime} + dv - vd^{\prime} + 2d^{\prime}v}{v(1 + v)}}{\frac{d}{1 + v}} \\
           &= 1 + \frac{1}{1 + v} + \frac{d^{\prime}}{d} \\
           &> \frac{1}{1 + v}.
    \end{align*}
    Thus, we conclude that the optimal competitive ratio of NoKnowledgeToward is \( 1 + \frac{1}{v} \). This proves Theorem~\ref{thm:noknowlower}.
\end{proof}

\section{Conclusion}
We considered the problem of two robots capturing an oblivious moving target on an infinite line. Two cases were considered depending on whether the target is moving toward or away from the origin. In each of these two cases, we considered different constraints based on the knowledge about the speed and the initial distance of the target from the origin. Our algorithms are optimal in the number of turns required to achieve the desired competitive ratio. All algorithms were based on the F2F communication model.

It remains an open problem to prove tight bounds for the case when the distance is unknown, and the target is moving away from the origin. It also appears that there is a tradeoff between the competitive ratio and the number of direction changes during the execution of the search algorithm. However, it is an open problem to determine the optimal tradeoffs. 

An interesting and challenging open problem, motivated by \cite{isaacCzyzowiczGKKNOS16,PODC16}, concerns the \( n \)-searcher problem for capturing a moving target when at most \( f \) of the searchers may be byzantine or crash faulty, respectively.

\bibliographystyle{abbrv}
\bibliography{refs}
\end{document}